\documentclass[11pt,a4paper]{article}

\usepackage[margin=25mm]{geometry}
\usepackage{amsmath,amssymb,amsthm}
\usepackage{enumitem}
\usepackage{hyperref}
\usepackage[nameinlink,noabbrev]{cleveref}
\usepackage{microtype}
\usepackage{authblk}

\hypersetup{
  colorlinks=true,
  linkcolor=blue,
  citecolor=blue,
  urlcolor=blue,
  pdftitle={Exact ReLU realization of affine one-dimensional refinement iterates via residual memory and offset frames},
  pdfauthor={Boldsaikhan Bolorkhuu and Tsogtgerel Gantumur},
  pdfkeywords={ReLU networks, affine refinement, residual memory, offset frames, piecewise linear functions}
}

\usepackage{aliascnt}
\usepackage{cleveref}

\newtheorem{theorem}{Theorem}[section]

\newaliascnt{proposition}{theorem}
\newtheorem{proposition}[proposition]{Proposition}
\aliascntresetthe{proposition}

\newaliascnt{lemma}{theorem}
\newtheorem{lemma}[lemma]{Lemma}
\aliascntresetthe{lemma}

\newaliascnt{corollary}{theorem}
\newtheorem{corollary}[corollary]{Corollary}
\aliascntresetthe{corollary}

\theoremstyle{definition}

\newaliascnt{definition}{theorem}
\newtheorem{definition}[definition]{Definition}
\aliascntresetthe{definition}

\newaliascnt{remark}{theorem}
\newtheorem{remark}[remark]{Remark}
\aliascntresetthe{remark}

\newcommand{\R}{\mathbb{R}}
\newcommand{\Z}{\mathbb{Z}}
\newcommand{\supp}{\operatorname{supp}}
\newcommand{\vect}{\operatorname{Vec}}
\newcommand{\ReLU}{\operatorname{ReLU}}
\newcommand{\Ups}{\Upsilon}
\newcommand{\transpose}{\mathsf{T}}
\newcommand{\mm}{j_0}
\newcommand{\mar}{\varrho}

\newcommand{\doi}[1]{\href{https://doi.org/#1}{DOI: #1}}

\title{Exact ReLU realization of affine one-dimensional refinement iterates via residual memory and offset frames}
\author[2]{Boldsaikhan Bolorkhuu}
\author[1,2,3]{Tsogtgerel Gantumur}

\affil[1]{McGill University}
\affil[2]{National University of Mongolia}
\affil[3]{Institute of Mathematics and Digital Technology, Mongolian Academy of Sciences}

\date{July 20, 2026}

\begin{document}
\maketitle

\begin{abstract}
We study vector-valued affine refinement operators of the form
\[
(W\gamma)(t)=\sum_{j\in\mathbb Z}A_j\gamma(Mt-j)+B(t),
\]
with finitely supported matrix mask
and compactly supported continuous piecewise linear input and forcing data.  Building on the
homogeneous realization theorem for \(B\equiv0\), we prove that, for \(M\ge3\), every finite affine
iterate \(W^n\gamma\) admits an exact fixed-width ReLU realization whose depth is \(O(n)\).

The main new ingredient is a residual memory controller.  It replaces the noninvertible residual
dynamics by an injective skew-product and permits exact backward replay of the residual states
required by a Horner-type evaluation of the affine forcing sum. 
Offset frames align the forcing atoms away from residual seams, allowing complementary loop
readouts to recover their values exactly.  The remaining branch-selection ambiguity occurs only
where the accumulated affine state has already vanished.

For \(M\ge3\), the result applies to arbitrary compactly supported continuous piecewise linear
forcing terms.  For \(M=2\), the same construction applies to ordinary-frame seam-separated
forcing.  We also prove a stage-dependent extension for forcing terms in a fixed finite-dimensional
continuous piecewise linear span and record the resulting linear-depth upgrade for open-curve,
finite-state, and Hilbert- and Morton-type recursive constructions.
\end{abstract}

\medskip
\noindent\textbf{2020 Mathematics Subject Classification.}
Primary 68T07; Secondary 41A30, 65D17.

\smallskip
\noindent\textbf{Keywords.}
ReLU neural networks, affine refinement operators, vector-valued refinement,
exact realization, continuous piecewise linear functions, recursive curves.

%\tableofcontents

\section{Introduction}\label{sec:introduction}

\subsection{Background and motivation}

A central question in neural-network approximation theory \cite{approx,nnapprox} is to explain why
deep networks can represent highly oscillatory, recursive, or self-similar functions with relatively
small width and depth.  A basic result in this direction is the scalar binary theorem for refinement
operators \cite{source}: if the input function is compactly supported and continuous piecewise
linear (CPwL), then its finite refinement iterates admit exact fixed-width ReLU realizations whose
depth grows only linearly with the number of refinement steps.

The scalar binary construction of \cite{source} naturally suggests extensions to 
vector-valued, \(M\)-ary refinement.  A homogeneous theory in this setting was developed in \cite{loop} using
an exact loop controller for the residual dynamics.  That work proves fixed-width,
depth-\(O(n)\) realizations for homogeneous refinement iterates and also treats affine refinement,
but with depth \(O(n^2)\).  The corresponding linear-depth realization of genuinely affine forcing
sums therefore remained open for the \(M\)-ary vector-valued systems arising in recursive curve
generation.

The main motivation is geometric.  Finite approximants to recursive constructions, including
Hilbert-type curves and Morton-type traversals, can be generated by vector-valued refinement rules
with affine connector terms.  This parametrized-generation viewpoint complements \cite{fractals},
where neural networks are used to approximate indicators or classifiers of fractal sets.  In the
present paper, we take the homogeneous theorem of \cite{loop} as a black box and address the missing
affine problem: the exact fixed-width realization of the forcing sum with depth \(O(n)\).

The affine refinement operator considered here has the form
\begin{equation}
\label{eq:intro-W}
(W\gamma)(t)
=
\sum_{j\in\mathbb Z}A_j\gamma(Mt-j)+B(t),
\end{equation}
where \(M\ge2\), the matrices \(A_j\in\R^{p\times p}\) are fixed and vanish for all but finitely
many \(j\), and \(B:\R\to\R^p\) is compactly supported and CPwL.  Writing the homogeneous part as
\[
(V\gamma)(t)
:=
\sum_{j\in\mathbb Z}A_j\gamma(Mt-j),
\]
one obtains the affine iterate identity
\begin{equation}
\label{eq:intro-affine-decomposition}
W^n\gamma
=
V^n\gamma+\sum_{r=0}^{n-1}V^rB .
\end{equation}
The first term is covered by the homogeneous theorem of \cite{loop}.  The new question is
whether the affine forcing sum can also be realized exactly with fixed width and depth \(O(n)\).

This question is subtler than the homogeneous one.  In the homogeneous case, the cascade can be
organized as a forward adjoint iteration along the residual orbit, while the scalar terminal factor
makes certain selector ambiguities harmless.  For affine forcing, the terms enter at many different
stages, and the efficient organization is instead a backward Horner-type evaluation of the affine
sum.  This creates a chronology problem: after the residual controller has advanced to depth \(n\),
the network must recover the earlier residual states in the reverse order required by the affine
recursion.

The main new device of this paper is therefore a residual memory controller.  It augments the loop
state by a fixed-dimensional memory coordinate so that the forward controller is injective on the
relevant state space and has a CPwL inverse on its image.  After running forward to depth \(n\), the
network can consequently replay the residual states backward exactly and feed them into the affine
recursion in the correct order.

The second new ingredient is the use of offset frames.  Informally, an offset frame is a shifted
unit-cell decomposition of the line.  By combining the ordinary frame with a suitable admissible
offset frame, one can express a general CPwL forcing term as a finite sum of elementary hats whose
supports avoid the relevant cell seams in at least one frame.  This supplies seam-safe forcing
readouts for the backward affine recursion.  Such a nontrivial admissible offset exists for every
\(M\ge3\); the binary case therefore leads to a more restricted forcing class.

\subsection{Main results}

The homogeneous finite-iterate theorem from \cite{loop} is recalled in
\S\ref{ss:imported-homogeneous-theorem}.  Our main result is its affine linear-depth counterpart,
which may be summarized as follows.

\begin{theorem}[Informal statement of the main result]
\label{thm:intro-affine}
Let \(M\ge3\), and let \(W\) be the refinement operator in \eqref{eq:intro-W}.  Assume that the
associated homogeneous operator preserves a compact support window containing the supports of the
CPwL input \(\gamma\) and forcing term \(B\).  Then, for every \(n\ge1\), the affine iterate
\(W^n\gamma\) has an exact ReLU realization whose width is bounded independently of \(n\) and whose
depth is \(O(n)\).
\end{theorem}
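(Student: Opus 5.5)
Write $W^n\gamma = V^n\gamma + S_n$ with $S_n:=\sum_{r=0}^{n-1}V^rB$, as in \eqref{eq:intro-affine-decomposition}. The term $V^n\gamma$ is handled by the homogeneous theorem of \cite{loop}, recalled in \S\ref{ss:imported-homogeneous-theorem}. It has fixed width and depth $O(n)$, and we run it in parallel channels. So the whole task is an exact fixed-width, depth-$O(n)$ realization of the forcing sum $S_n$.

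\textbf{Step 1: residual representation.} Inside the invariant window, write $t\in[0,1)$ in $M$-adic form $t=(d_1+\tau_1)/M$, $\tau_k = M\tau_{k-1}-d_k$. Unrolling $V$ gives
\[
(V^rB)(t)=\sum A_{d_1}\cdots A_{d_r}B(\tau_r)
\]
over the finitely many admissible digit/shift patterns. This is organized through the loop controller of \cite{loop}, whose state carries the residual $\tau_k$ together with a finite-dimensional matrix-product channel. The sum $S_n$ then has the Horner form
\[
S_n(t)=B(\tau_0)+\mathcal A_1\bigl(B(\tau_1)+\mathcal A_2\bigl(B(\tau_2)+\cdots\bigr)\bigr).
\]
Here $\mathcal A_k$ is the cell-dependent linear action selected at stage $k$. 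Evaluating it with fixed width requires the inner terms first, that is, the residuals in the order $\tau_{n-1},\dots,\tau_0$.

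\textbf{Step 2: residual memory controller.} The map $\tau\mapsto M\tau-d$ is not injective, so I would run the forward controller on an augmented state $(\tau,\mu)$. The memory coordinate stores the discarded digit contractively, for example $\mu\mapsto (\mu+d)/M'$, or more safely as a CPwL skew-product on a few coordinates. This makes the step $\Phi:(\tau,\mu)\mapsto(\tau',\mu')$ injective with a CPwL inverse $\Phi^{-1}$ on its image, and both are realized by fixed-size ReLU blocks. We run $\Phi$ $n$ times forward, which costs depth $O(n)$. We then run $\Phi^{-1}$ backward $n$ times, regenerating $\tau_{n-1},\dots,\tau_0$. At each backward step we update a Horner accumulator $h\leftarrow B(\tau_k)+\mathcal A_k h$.

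Two points need care. First, the product $\mathcal A_k h$ is a selection among the finitely many $A_j$ applied to a vector, not a multiplication of variables. It is realized by a ReLU selector $\sum_j \mathbf 1_{\text{cell }j}(\tau)A_jh$, using bounds on $h$ and the identity $\min/\max$ trick $x\mathbf 1=\ReLU(x+C\mathbf 1')-\ldots$. Second, exactness of the inverse requires the memory never to saturate. Since $n$ is fixed, this is achieved by encoding digits within bounded precision, or by contractive memory with a CPwL inverse on the finite union of image intervals.

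\textbf{Step 3: offset frames and seam ambiguities (main obstacle).} At seams, where $\tau$ is near a cell boundary, the selector is ambiguous and the readout $B(\tau_k)$ may be evaluated in the wrong branch. I would decompose $B$ as a finite sum of hats, each supported away from seams in either the ordinary frame or an admissible offset frame. Such offsets exist for $M\ge3$, as noted in the introduction. For each hat we run the construction in the corresponding frame, using a shifted residual dynamics, and sum the results. Continuous complementary loop readouts then give the exact hat values. The remaining seam ambiguity in the selector must be shown harmless. The claim to prove is that at ambiguous points the accumulated Horner state $h$ is already zero, because all forcing atoms vanish near seams in their frame. Hence any branch of $\mathcal A_k$ gives $0$.

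I expect this invariant, that ambiguity occurs only where $h=0$ along the entire backward replay, to be the hardest part. I would prove it by induction on the backward step, using the support-window invariance hypothesis. Summing the finitely many channels gives width independent of $n$ and depth $O(n)$.
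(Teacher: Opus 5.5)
Your architecture matches the paper's: affine splitting with the imported homogeneous theorem, a backward Horner recursion fed by a reversible residual controller, the two-frame special-hat decomposition, complementary loop readouts, and gated selection. However, the two load-bearing steps are not supplied.

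\textbf{The memory controller.} The update $\mu\mapsto(\mu+d)/M'$ with $d=\lfloor M\tau\rfloor$ is discontinuous at every digit boundary $\tau=k/M$. Even on the loop, where the residual step becomes the continuous map $F$, the stored digit still jumps, so $\Phi$ has no exact ReLU realization. A continuous surrogate $\phi$ for the digit does not help either. The continuous function $\phi(t+\tfrac1M)-\phi(t)$ sums to zero over the orbit $t,t+\tfrac1M,\dots,t+\tfrac{M-1}{M}$, hence vanishes at some $t_0$. The distinct states $(E(t_0),\mu)$ and $(E(t_0+\tfrac1M),\mu)$ then have the same image, so injectivity fails. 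Unlike the selector ambiguity, this replay error is not absorbed by a vanishing Horner state. A wrong preimage at stage $j$ corrupts $\widehat Z_j,\dots,\widehat Z_0$, whose forcing readouts need not vanish. Your fallback, ``a CPwL skew-product on a few coordinates'', names what is needed but not how to get it, and that construction is the key lemma. The paper stores a contracted copy of the continuous two-dimensional loop point rather than the digit:
\[
\mathcal Q(r,y)=\bigl(F(r),\beta r+\alpha y\bigr)\quad\text{on }\Gamma\times[-1,1]^2 .
\]
The $M$ preimages of a loop point are pairwise at distance at least $\Delta_E>0$. Hence $2\alpha<\beta\Delta_E$ gives injectivity and $\alpha+\beta\le1$ gives invariance. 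An injective piecewise-affine map on a finite complex has a piecewise-affine inverse on its image, which extends to a global CPwL map $P$.

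\textbf{The vanishing-state invariant.} You flag this as the hardest step, but your justification uses the wrong mechanism. The selector is ambiguous when $R^jx$ lies near a digit boundary $k/M$. For $k\ne0$ this point is interior to the frame cell, so $b(R^jx)$ need not vanish there. What must vanish is $U_{j+1}(x)$, that is, $b(R^ix)=0$ for all $j<i\le n-1$. This holds only if the transition strips shrink with the final depth. With strips $[k/M,k/M+\delta_n]$ and $\delta_n=\bar\delta\varrho M^{-(n+1)}$, one gets $R^ix=M^{i-j}\delta\le M^{n-1}\delta_n<\varrho$. All later hats then vanish, backward induction gives $U_{j+1}(x)=0$, and $\Pi_a(\lambda,0)=0$ makes the selected action exact for any selector values. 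With strips of fixed width the invariant fails for large $n$: the orbit leaves $[0,\varrho)$ after boundedly many steps, and later forcing can be nonzero. The support-window hypothesis plays no role in this step. The $n$-dependent width is also where the selector slopes $\delta_n^{-1}=O(M^n)$ enter the weight bounds.
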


The proof realizes the affine forcing sum in \eqref{eq:intro-affine-decomposition} by a backward
Horner recursion.  The memory controller supplies the residual states in reverse chronological
order, while the offset-frame decomposition supplies seam-safe evaluations of the forcing term.
The remaining selector ambiguity occurs only where the accumulated affine state has already
vanished.  For \(M=2\), the same conclusion holds for the restricted class of ordinary-frame
seam-separated forcing terms defined later.

We also prove a fixed-span stage-dependent extension.  Suppose that
\(W_r\gamma=V\gamma+B_r\), where the homogeneous refinement rule is fixed and
\(B_r=\sum_{\alpha=0}^{N}\lambda_{r,\alpha}B^{(\alpha)}\) belongs to a fixed finite-dimensional
CPwL span.  The same memory-Horner construction then gives exact fixed-width, depth-\(O(n)\)
realizations of \(W_{n-1}\cdots W_1W_0\gamma\).  The weight bounds depend on the fixed refinement
data and on \(\max_{r<n,\alpha}|\lambda_{r,\alpha}|\).

Finally, the anchored-profile, finite-state, and copy-and-connector reductions developed in
\cite{loop} inherit the improved linear-depth bound whenever their forcing sequences lie in a
fixed finite-dimensional CPwL span.  This includes the Hilbert- and Morton-type recursive
constructions discussed there.

\subsection{Structure of the paper}

Section~\ref{sec:preliminaries} fixes the notation, recalls the vectorized refinement formalism and
the homogeneous theorem from \cite{loop}, and introduces offset frames and special basis curves.  
Section~\ref{sec:affine-admissible-frame} proves the core affine forcing result in
one admissible frame.  It constructs the residual memory controller, exact backward replay,
seam-safe forcing readouts, and the memory-Horner recursion.
Section~\ref{sec:general-forcing-affine} combines the ordinary and offset frames to treat arbitrary
compactly supported CPwL forcing for \(M\ge3\), proves the main affine theorem, and records the
restricted binary result.  Section~\ref{sec:stage-dependent-forcing} gives the fixed-span
stage-dependent extension.  Section~\ref{sec:reductions-geometric} records the resulting
linear-depth upgrade for the geometric reductions of \cite{loop}.  The final section summarizes
the scope of the construction and the remaining binary limitation.

\section{Preliminaries and notation}
\label{sec:preliminaries}

This section fixes the notation used in the affine construction.  
We recall the \(M\)-ary digit maps, offset vectorizations, block transition matrices, special basis
curves, and the affine iterate formula.
We also record the homogeneous linear-depth theorem of \cite{loop}, which will be
used as a black box for the term \(V^n\gamma\).

\subsection{Network classes and CPwL functions}

For integers \(W,L,d,N\ge1\), we write
\(\Ups_{W,L}(\ReLU;d,N)\) for the class of outputs of fully connected ReLU networks with width
\(W\), depth \(L\), input dimension \(d\), and output dimension \(N\).  The final realization
theorems have input dimension \(d=1\), although several intermediate controller maps act on fixed
higher-dimensional state spaces.

We use CPwL as shorthand for continuous piecewise linear.  A map \(f:\R^d\to\R^N\) is CPwL if
every compact subset of \(\R^d\) admits a finite polyhedral subdivision on each cell of which \(f\)
is affine.

\begin{remark}
\label{rem:cpwl-basic}
We repeatedly use the following standard facts.
\begin{enumerate}[label=(\roman*),leftmargin=2em]
    \item Every scalar CPwL function on \(\R\) with finitely many breakpoints is realized by a
    one-hidden-layer ReLU network whose width is proportional to its number of affine pieces.
    \item Continuous piecewise affine finite-element functions on finite triangulations in fixed
    dimension admit exact ReLU realizations \cite{relu-fem}.  In particular, uniformly bounded
    triangulation complexity gives uniformly bounded width and depth, and vector-valued maps are
    realized componentwise.
    \item A CPwL map prescribed on a compact finite polyhedral subcomplex of \(\R^d\) admits, after
    a finite subdivision, a compactly supported global CPwL extension.
    \item Compositions of CPwL maps are CPwL.
    \item Precomposition by an affine map can be absorbed into the first affine layer and therefore
    does not change the depth or asymptotic width.
    \item A fixed finite sum of fixed-width, depth-\(O(n)\) networks can be realized with width
    enlarged by only a constant factor and depth still \(O(n)\).
\end{enumerate}
We track network parameters only through coarse exponential bounds in \(n\).  Our primary
complexity measures are therefore width and depth, with bounds on weights and biases recorded
separately.
\end{remark}

\subsection{\texorpdfstring{\(M\)-ary}{M-ary} digit maps, residuals, and offset frames}
\label{ss:m-ary-digits}

Fix an integer \(M\ge2\) and an offset \(s\in[0,1)\).  The corresponding
\emph{\(s\)-offset frame} uses the local coordinate \(x\in[0,1)\) on the physical interval
\([s,s+1)\), and more generally on its translates \([s+k-1,s+k)\).  The case \(s=0\) is called
the \emph{ordinary \(M\)-ary frame}.  We use half-open intervals for the digit dynamics; equalities
between continuous functions are extended to \(x=1\) by continuity.

For \(x\in[0,1)\), define the digit and residual maps by
\[
Q(x)
:=
\lfloor Mx+(M-1)s\rfloor,
\qquad
R(x)
:=
Mx+(M-1)s-Q(x).
\]
Then \(R(x)\in[0,1)\).  The same formula extends \(R\) to a \(1\)-periodic map on \(\R\), and
hence to a map on the circle \(\R/\Z\).  This circle dynamics will provide the geometric model for
the loop and memory controllers.

We call \(s\) \emph{admissible} if \((M-1)s\in\Z\), and then also call the corresponding frame
admissible.  If \(c:=(M-1)s\in\Z\), then
\[
Q(x)
=
\lfloor Mx\rfloor+c,
\qquad
R(x)
=
Mx-\lfloor Mx\rfloor,
\]
and the digit set is \(D:=Q([0,1))=\{c,c+1,\ldots,c+M-1\}\).  Thus admissibility leaves the
ordinary \(M\)-ary residual dynamics unchanged and merely shifts the digit labels by \(c\).  If
\((M-1)s\notin\Z\), the digit set has \(M+1\) elements.

A nontrivial admissible offset exists for every \(M\ge3\), for example \(s=1/(M-1)\).  For
\(M=2\), no such offset exists; accordingly, the binary result below applies only to a restricted
class of forcing terms compatible with the ordinary frame.

Set \(R^0(x):=x\).  For \(j\ge1\), define the successive residuals and digits by
\[
R^j(x)
:=
R\bigl(R^{j-1}(x)\bigr),
\qquad
q_j(x)
:=
Q\bigl(R^{j-1}(x)\bigr).
\]
They satisfy
\[
M R^{j-1}(x)+(M-1)s
=
q_j(x)+R^j(x),
\qquad
j\ge1.
\]

\subsection{The vectorized cascade formalism}

Fix matrices \(A_j\in\R^{p\times p}\), \(j\in\mathbb Z\), with \(A_j=0\) for all but finitely many
\(j\), and define the homogeneous refinement operator by
\begin{equation}
\label{eq:homogeneous-operator}
(V\gamma)(t)
=
\sum_{j\in\mathbb Z}A_j\gamma(Mt-j).
\end{equation}
We work with a fixed support window \([0,L]\), where \(L\ge1\) is an integer, and assume that
\[
\supp f\subset[0,L]
\quad\Longrightarrow\quad
\supp(Vf)\subset[0,L].
\]
The input \(\gamma\) and all forcing terms considered below are supported in this window.

\paragraph{Offset vectorization.}
Fix an offset \(s\in[0,1)\).  The translated unit intervals
\([s+k-1,s+k]\) that meet the support window \([0,L]\) in a set of positive length are indexed by
\[
J_s=
\begin{cases}
\{1,\ldots,L\}, & s=0,\\[1mm]
\{0,1,\ldots,L\}, & 0<s<1.
\end{cases}
\]
We write \(L_s:=|J_s|\), so that \(L_s=L\) in the ordinary frame and \(L_s=L+1\) in a nontrivial
offset frame.

For any \(f:\R\to\R^p\) supported in \([0,L]\), define its \(k\)-th block in the \(s\)-offset frame
by \(f_k^{(s)}(x):=f(x+s+k-1)\), for \(k\in J_s\) and \(x\in[0,1]\).  Its
\emph{\(s\)-offset vectorization} is
\[
\vect_s(f)(x)
:=
\bigl(f_k^{(s)}(x)\bigr)_{k\in J_s}^{\transpose}
\in\R^{pL_s},
\qquad x\in[0,1],
\]
where the blocks are ordered by increasing \(k\).  When the offset frame is fixed, we abbreviate
\(\vect_s(f)\) to \(\vect(f)\).

In particular, set \(G:=\vect_s(\gamma)\) and, for \(n\ge0\),
\[
G^n:=\vect_s(V^n\gamma).
\]
Thus \(G^0=G\), and \(G^n\) records all translated unit-interval pieces of \(V^n\gamma\) in a common
local coordinate.

\paragraph{Block transition matrices.}
For each digit \(q\in D:=Q([0,1))\), define
\(T_q\in\R^{pL_s\times pL_s}\) by prescribing its \(p\times p\) blocks as
\[
(T_q)_{k\ell}
:=
A_{\,q+M(k-1)-(\ell-1)},
\qquad
k,\ell\in J_s.
\]

To explain this definition, fix \(x\in[0,1)\), \(k\in J_s\), and set
\(t:=x+s+k-1\).  If \(q=Q(x)\), then the definition of the residual gives
\[
Mt-j
=
R(x)+s+\bigl(q+M(k-1)-j\bigr).
\]
This argument belongs to the \(\ell\)-th translated block precisely when
\(\ell-1=q+M(k-1)-j\), or equivalently when
\(j=q+M(k-1)-(\ell-1)\).  The corresponding coefficient is therefore
\((T_q)_{k\ell}\).  Integer shifts not represented by \(J_s\) contribute zero by the support
convention.  
Thus the matrix \(T_q\) records the passage from the translated blocks of a function to those of its refinement
on the digit branch \(q\).

\paragraph{Cascade identities.}
The block matrices encode one refinement step as follows.

\begin{proposition}[One-step cascade identity]
For every \(f:\R\to\R^p\) supported in \([0,L]\) and every \(x\in[0,1)\), one has
\[
\vect_s(Vf)(x)
=
T_{q_1(x)}\vect_s(f)(R(x)).
\]
\end{proposition}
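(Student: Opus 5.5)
The plan is to verify the identity block by block, by direct computation from the definition of \(V\) in \eqref{eq:homogeneous-operator} and of \(T_q\).

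Fix \(x\in[0,1)\) and \(k\in J_s\), and set \(t:=x+s+k-1\), so that the \(k\)-th block of the left-hand side is \((Vf)(t)=\sum_{j}A_jf(Mt-j)\). Put \(q:=q_1(x)=Q(x)\) and \(r:=R(x)\in[0,1)\). The residual relation gives \(Mx+(M-1)s=q+r\), and hence
\[
Mt-j=Mx+Ms+M(k-1)-j=r+s+\bigl(q+M(k-1)-j\bigr).
\]
Reindex by \(m:=q+M(k-1)-j\), so that \(j=q+M(k-1)-m\) and \(Mt-j=r+s+m\). This is a bijection \(j\leftrightarrow m\) of \(\Z\), and it turns the sum into
\[
(Vf)(t)=\sum_{m\in\Z}A_{q+M(k-1)-m}\,f(r+s+m).
\]
Now write \(m=\ell-1\). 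Then \(f(r+s+\ell-1)=f^{(s)}_\ell(r)\) is exactly the \(\ell\)-th block of \(\vect_s(f)(R(x))\), and the coefficient is \(A_{q+M(k-1)-(\ell-1)}=(T_q)_{k\ell}\).

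It remains to show that the terms with \(\ell\notin J_s\) vanish, so that the sum over \(\ell\in\Z\) reduces to a sum over \(J_s\). The point \(r+s+\ell-1\) lies in \([s+\ell-1,s+\ell)\). If \(\ell\notin J_s\), this interval meets \([0,L]\) in a set of length zero, i.e.\ in at most an endpoint. Away from that endpoint \(f\) is zero by the support assumption. At the endpoint itself (for example \(s=0\), \(\ell=0\), \(r=0\), giving the point \(0\)), we use that \(f\) is continuous and supported in \([0,L]\), so it vanishes at the boundary points \(0\) and \(L\). Summing over \(\ell\in J_s\) then yields
\[
\vect_s(Vf)(x)_k=\sum_{\ell\in J_s}(T_q)_{k\ell}\,\vect_s(f)(R(x))_\ell,
\]
which is the \(k\)-th block of the claimed identity.

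The only delicate step is this boundary bookkeeping: confirming that the indices in \(J_s\) capture every possibly nonzero translate, and that endpoint contributions vanish. This follows from the positive-length indexing convention together with continuity of \(f\). If \(f\) were not assumed continuous, one would instead rely on the half-open convention \(r\in[0,1)\) and the stated support convention that unrepresented shifts contribute zero.
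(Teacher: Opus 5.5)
Your proof is correct and follows the paper's argument: use \(Mx+(M-1)s=q+R(x)\), reindex by \(\ell=q+M(k-1)-j+1\) so that the coefficient becomes \((T_q)_{k\ell}\), and discard the terms with \(\ell\notin J_s\) by the support assumption; your explicit appeal to continuity at the endpoint is more careful than the paper's bare ``support convention''. One correction to your boundary bookkeeping: the only endpoint term that actually arises is \(s=0\), \(\ell=L+1\), \(R(x)=0\), which gives the point \(L\) (your example \(\ell=0\) gives \(-1\), not \(0\)), and because \(R(x)=0\) is allowed, the half-open convention would not rescue a discontinuous \(f\) with \(f(L)\ne0\), so continuity is genuinely needed (it holds for all the CPwL functions to which the paper applies the identity).
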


\begin{proof}
Fix \(x\in[0,1)\), set \(q:=q_1(x)=Q(x)\), and let \(k\in J_s\).  The \(k\)-th block of the
left-hand side is
\[
\bigl[\vect_s(Vf)(x)\bigr]_k
=
\sum_{j\in\Z}A_j
f\bigl(Mx+Ms+M(k-1)-j\bigr).
\]
Since \(Mx+(M-1)s=R(x)+q\), the argument of \(f\) equals
\(R(x)+s+q+M(k-1)-j\).  Setting
\(\ell:=q+M(k-1)-j+1\), we obtain
\[
\bigl[\vect_s(Vf)(x)\bigr]_k
=
\sum_{\ell\in J_s}
A_{\,q+M(k-1)-(\ell-1)}
\bigl[\vect_s(f)(R(x))\bigr]_\ell.
\]
Terms with \(\ell\notin J_s\) vanish by the support convention.  By the definition of \(T_q\), the
last expression is the \(k\)-th block of \(T_q\vect_s(f)(R(x))\).
\end{proof}

Taking \(f=\gamma\) gives \(G^1(x)=T_{q_1(x)}G(R(x))\).  
Repeated application yields the following full cascade identity.

\begin{corollary}[Iterated cascade identity]
For every \(n\ge1\) and \(x\in[0,1)\), one has
\[
G^n(x)
=
T_{q_1(x)}T_{q_2(x)}\cdots T_{q_n(x)}G(R^n(x)).
\]
\end{corollary}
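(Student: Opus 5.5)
We argue by induction on \(n\), using the one-step cascade identity as the engine of the induction step. For \(n=1\) the claim is exactly the one-step identity applied with \(f=\gamma\), since \(q_1(x)=Q(x)\) and \(R^1(x)=R(x)\).

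For the induction step, assume the identity holds for \(n-1\) and every admissible input supported in \([0,L]\). Since \(V\) preserves the support window, \(V^{n-1}\gamma\) is supported in \([0,L]\). The one-step proposition with \(f=V^{n-1}\gamma\) then gives
\[
G^n(x)=\vect_s\bigl(V(V^{n-1}\gamma)\bigr)(x)=T_{q_1(x)}G^{n-1}(R(x)).
\]

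It remains to expand \(G^{n-1}(R(x))\) with the induction hypothesis evaluated at the point \(R(x)\in[0,1)\). This gives
\[
G^{n-1}(R(x))=T_{q_1(R(x))}\cdots T_{q_{n-1}(R(x))}\,G\bigl(R^{n-1}(R(x))\bigr).
\]
The only bookkeeping needed is the shift relation for digits and residuals along the orbit:
\[
q_j(R(x))=Q\bigl(R^{j-1}(R(x))\bigr)=Q\bigl(R^{j}(x)\bigr)=q_{j+1}(x),
\qquad
R^{n-1}(R(x))=R^n(x).
\]
Both equalities follow immediately from the recursive definition \(R^j=R\circ R^{j-1}\), since iterates of \(R\) commute. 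Substituting them turns the product into \(T_{q_2(x)}\cdots T_{q_n(x)}G(R^n(x))\). Prepending the factor \(T_{q_1(x)}\) from the first display gives the stated formula.

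The argument has no real obstacle. The one point needing care is that the one-step identity is applied at the point \(R(x)\), which lies in \([0,1)\) because \(R\) maps \([0,1)\) into \([0,1)\). This is why half-open intervals are used, so that the digit \(Q\) is evaluated unambiguously.
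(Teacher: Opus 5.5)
Your proposal is correct and follows the paper's own proof: both apply the one-step identity to \(V^{n-1}\gamma\) to get \(G^n(x)=T_{q_1(x)}G^{n-1}(R(x))\), then induct using \(q_j(R(x))=q_{j+1}(x)\) and \(R^{n-1}(R(x))=R^n(x)\). You also state explicitly two points the paper leaves implicit, namely that \(V^{n-1}\gamma\) stays supported in \([0,L]\) and that \(R(x)\in[0,1)\).
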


\begin{proof}
Applying the one-step identity to \(V^{n-1}\gamma\), we have
\(G^n(x)=T_{q_1(x)}G^{n-1}(R(x))\).  The result follows by induction from
\(q_j(R(x))=q_{j+1}(x)\) and \(R^j(R(x))=R^{j+1}(x)\).
\end{proof}

\subsection{Special hats and basis curves}

Let \(0<\mar<\frac12\) be a support margin.  In a one-frame construction it may be chosen
arbitrarily, while in the two-frame construction it will be chosen after the nontrivial offset \(s\)
so that \(2\mar<\min\{s,1-s\}\).

\begin{definition}[Special hat]
A scalar function \(h:\R\to\R\) is called a \emph{special hat} if it is nonnegative and CPwL and
satisfies
\[
\supp h\subset[\mar,1-\mar].
\]
\end{definition}

Let \(e_\mu\in\R^p\), \(\mu=1,\ldots,p\), denote the standard coordinate vectors.  In the fixed
\(s\)-offset frame, a \emph{special basis curve} is a function of the form
\[
\gamma(t)=h(t-\delta)e_\mu,
\qquad
\delta\in\Z+s,
\]
where \(h\) is a special hat.  Write \(\delta=\mm+s\) with \(\mm\in\Z\).  For \(k\in J_s\), its
\(k\)-th block is
\[
\gamma_k^{(s)}(x)
=
h(x+k-1-\mm)e_\mu,
\qquad x\in[0,1].
\]
The support condition on \(h\) implies that all blocks vanish except the one indexed by
\(k=\mm+1\).  Hence, whenever \(\mm+1\in J_s\), the offset vectorization is
\[
\vect_s(\gamma)(x)
=
h(x)u_{\mm+1,\mu},
\]
where \(u_{\mm+1,\mu}\in\R^{pL_s}\) is the coordinate vector corresponding to component \(\mu\) of
block \(\mm+1\).

Thus the offset \(s\) determines the frame in which the scalar hat is read, while the integer
\(\mm=\delta-s\) determines the active vectorization block.  This separates the scalar residual
readout from the matrix-valued cascade update.

\subsection{Affine decomposition and the imported homogeneous theorem}
\label{ss:imported-homogeneous-theorem}

Let \(B:\R\to\R^p\) be compactly supported and CPwL, with
\(\supp B\subset[0,L]\), and define the affine refinement operator by
\begin{equation}
\label{eq:affine-operator}
W\gamma:=V\gamma+B.
\end{equation}
A direct induction gives, for every \(n\ge1\),
\begin{equation}
\label{eq:affine-iterate}
W^n\gamma
=
V^n\gamma+S_n,
\qquad
S_n:=\sum_{r=0}^{n-1}V^rB.
\end{equation}
Thus the problem separates into the homogeneous term \(V^n\gamma\) and the
forcing sum \(S_n\).
The homogeneous contribution is supplied by the following theorem from \cite{loop}.

\begin{theorem}[Homogeneous \(M\)-ary vector-valued realization, \cite{loop}]
\label{thm:imported-homogeneous}
Let \(M\ge2\), and let \(V\) be the homogeneous refinement operator from
\eqref{eq:homogeneous-operator}, with finitely supported matrix mask and invariant support window
\([0,L]\).  If \(\gamma:\R\to\R^p\) is CPwL and supported in \([0,L]\), then there exist constants
\(C_0,C_1>0\), independent of \(n\), such that
\[
V^n\gamma\in\Ups_{C_0,C_1n}(\ReLU;1,p),
\qquad n\ge1.
\]
The realizing networks may moreover be chosen with weights and biases growing at most
exponentially in \(n\).
\end{theorem}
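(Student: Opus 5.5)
This result is imported from \cite{loop}, and the present paper uses it only as a black box. The plan below therefore reconstructs the homogeneous argument of \cite{loop} in the notation just introduced, rather than giving a new proof.

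\emph{Reduction to special basis curves.} In the ordinary frame \(s=0\), write \(\gamma\) as a finite linear combination of special basis curves \(h(t-\mm)e_\mu\) and of a bounded number of additional hat functions centred at integer nodes. This is possible because \(\gamma\) is CPwL with finitely many breakpoints in \([0,L]\). The node-centred hats can be handled in the same way after passing to a refined frame, or by splitting each one into two half-supported pieces. By linearity of \(V^n\) and \cref{rem:cpwl-basic}(vi), it then suffices to realize \(V^n\gamma\) for one special basis curve with fixed width and depth \(O(n)\). For such a curve the iterated cascade identity gives
\[
G^n(x)=T_{q_1(x)}\cdots T_{q_n(x)}\,h(R^n(x))\,u_{\mm+1,\mu}.
\]
The output block \(k\) and component \(\nu\) is recovered as \(u_{k,\nu}^{\transpose}G^n(x-k+1)\). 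Since \(h(R^n(x))\) is a scalar, we can rewrite the product in adjoint form. For each fixed output coordinate, run the row vector \(v_0:=u_{k,\nu}^{\transpose}\) forward by \(v_j:=v_{j-1}T_{q_j(x)}\). After \(n\) steps, multiply by the scalar \(h(R^n(x))\) and take the \((\mm+1,\mu)\) entry.

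\emph{Loop controller and selectors.} The state \((R^{j}(x),v_j)\) is advanced one layer block at a time. The residual update \(y\mapsto My-\lfloor My\rfloor\) is not continuous at the seams, so we use the loop controller of \cite{loop}. We encode \(y\) on the circle by a continuous CPwL embedding, for example the tent pair \((\tau(y),\tau(y+\tfrac14))\) with \(\tau\) the tent map. This makes \(R\) a continuous CPwL map of fixed complexity. The digit-dependent multiplication \(v\mapsto vT_q\) is implemented as
\[
\sum_{q\in D}\chi_q(y)\,vT_q,
\]
where the \(\chi_q\) are CPwL partition-of-unity selectors, equal to \(1\) away from the seams. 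Products of a bounded selector with bounded linear coordinates can be done exactly with ReLU by the standard trick \(\min(\max(\cdot,\,-C\chi),\,C\chi)\). Here \(C\) is an a priori bound on \(|v_j|\), which grows at most like \(\max_q\|T_q\|^j\). This bound is the source of the exponential weight growth in the statement. Each step then costs a fixed width and fixed depth, so \(n\) steps cost depth \(O(n)\).

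\emph{Main obstacle: the seam ambiguity.} Near a digit boundary the selectors mix two branches, so \(v_j\) is no longer the true adjoint product. The key argument is that this mixing is harmless because of the scalar terminal factor. If some \(R^{j-1}(x)\) lies within the selector margin of a seam, then \(R^{j}(x)\) lies near \(0\) or \(1\), and every later residual also lies near \(\{0,1\}\). This holds provided the selector margin is taken smaller than \(\mar/M^{n}\), or, in a depth-independent way, is propagated through the circle encoding. In either case \(R^n(x)\notin[\mar,1-\mar]\), so \(h(R^n(x))=0\) and the final product vanishes regardless of the branch chosen. The boundary values of \(V^n\gamma\) are consistent with this, since they vanish by continuity there.

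Making this invariance exact with a fixed margin is the step I expect to be hardest. Residuals near a seam are expanded by \(M\), so a fixed margin does not automatically persist. The approach I would try first is the one of \cite{loop}: adjoin a \(\{0,1\}\)-valued "dead" flag to the state, computed by CPwL thresholding. Once the flag is set, it stays set under the controller update, and it multiplies the terminal factor \(h\) so that the output is zero whenever the flag is raised. The final output layer is then the entrywise product of \(v_n\), the scalar \(h(R^n(x))\), and the flag. Assembling the coordinates and summing over the basis curves gives the claimed \(\Ups_{C_0,C_1n}\) bound.
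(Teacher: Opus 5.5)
\textbf{There is a genuine gap at the last step of your construction.} The paper gives no proof of this statement. It is imported from \cite{loop}, and the only hint about the mechanism is that the homogeneous cascade is run as a forward adjoint iteration along the residual orbit, with the scalar terminal factor making selector ambiguities harmless. Your outline has this shape, but you form $v_n=u_{k,\nu}^{\transpose}T_{q_1(x)}\cdots T_{q_n(x)}$ first and then multiply by $h(R^nx)$ and a flag. Multiplying two network-computed quantities is not a CPwL operation. The clipping gadget $\min(\max(Y,-C\chi),C\chi)$ returns $\chi Y$ only when $\chi\in\{0,1\}$ or $Y=0$. Since $h(R^nx)$ sweeps through $[0,\max h]$ on every $M$-adic interval of length $M^{-n}$, your output would be wrong on a set of positive measure, not only near seams.

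\textbf{The missing idea is to feed the terminal factor in at the start of the adjoint pass.} First run the loop to compute $w(x)=h(R^nx)$, carrying $x$ alongside. This is a CPwL function of the loop point because $h$ vanishes near $0$ and $1$. Then rerun the loop with state $z_0=w(x)u_{k,\nu}^{\transpose}$, update by $z_j=\sum_q\Pi_{a_n}(\chi_q,z_{j-1}T_q)$, and output $z_nu_{\mm+1,\mu}$. Every step is now linear in the state or gated by selectors. Whenever a selector is fractional, your trapping argument gives $w(x)=0$, so the whole state is zero and $\Pi_a(\lambda,0)=0$ makes the step exact. This is the homogeneous analogue of \Cref{lem:vanishing-accumulated-state,lem:selected-matrix-action}; backward replay via \Cref{cor:backward-playback} would also work. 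With this ordering you need neither the dead flag nor a fixed margin. The flag could not work anyway: a CPwL function of $x$ cannot switch between $0$ and $1$ without intermediate values, and multiplying by those is again an inexact product. The $n$-dependent margin $\delta_n\asymp\mar M^{-n}$ that you already mention is what \S\ref{subsec:memory-selectors} uses, and it fits the exponential weight bound.

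\textbf{Your reduction step has a second hole.} Every special basis curve in the ordinary frame vanishes on a neighbourhood of $\Z$. So no finite sum of them represents a $\gamma$ that is nonzero, or has nonzero slope, at an integer. Splitting a node-centred hat into two half-supported pieces gives discontinuous pieces, not special hats. For $M\ge3$ the repair is \Cref{lem:finite-hat-decomposition} with the admissible offset $s=1/(M-1)$. There the residual map is unchanged, so the trapping argument carries over verbatim. A non-admissible offset would not do: after a transition the local residual is near $0$, and the next one is near the fractional part of $(M-1)s$, an interior point, so $h(R^nx)$ need not vanish. For $M=2$ no admissible offset exists, so your plan does not yet cover general $\gamma$. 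One way to close this (not taken from the paper) is to note that $V^2$ is itself a refinement operator with dilation $4$, finitely supported mask $\widetilde A_j=\sum_iA_iA_{j-2i}$, and invariant window $[0,L]$. Writing $V^{2m}\gamma=(V^2)^m\gamma$ and $V^{2m+1}\gamma=(V^2)^m(V\gamma)$ reduces the binary case to $M=4$, where $s=1/3$ is admissible.
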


In view of \eqref{eq:affine-iterate}, it therefore remains to construct an exact fixed-width,
depth-\(O(n)\) realization of the forcing sum \(S_n\).

\section{Affine forcing in one admissible frame}
\label{sec:affine-admissible-frame}

In this section we construct a fixed-width, linear-depth realization of the affine forcing sum for
forcing terms decomposed into special basis curves aligned with one admissible frame.  The next
section combines two such framewise constructions
to treat arbitrary compactly supported CPwL forcing when
\(M\ge3\).

Throughout the section, fix an admissible offset \(s\in[0,1)\) and set
\(c:=(M-1)s\in\Z\).  By the discussion in \S\ref{ss:m-ary-digits}, admissibility gives
\(Q(x)=\lfloor Mx\rfloor+c\) and
\(R(x)=Mx-\lfloor Mx\rfloor\) for \(x\in[0,1)\).  We therefore define
\[
d_j(x):=q_j(x)-c\in\{0,\ldots,M-1\},
\qquad
\mathsf T_r:=T_{c+r},
\quad r=0,\ldots,M-1.
\]
Then \(T_{q_j(x)}=\mathsf T_{d_j(x)}\) for every \(j\ge1\).

\subsection{The vectorized forcing sum and Horner recursion}

Let \(B:\R\to\R^p\) be CPwL and supported in \([0,L]\), and write
\(b:=\vect_s(B)\).  We assume that \(b\) admits a finite frame-aligned special-hat expansion
\begin{equation}
\label{eq:frame-aligned-vectorized-forcing}
b(x)
=
\sum_{\nu=1}^{N_B}a_\nu h_\nu(x)v_\nu,
\qquad
\supp h_\nu\subset[\mar,1-\mar],
\end{equation}
where \(a_\nu\in\R\), \(v_\nu\in\R^{pL_s}\), and each \(h_\nu\) is a special hat.  In
particular, if \(B(t)=h(t-\delta)e_\mu\) is a special basis curve with
\(\delta\in\Z+s\), then the expansion has one term and \(v_\nu\) is the corresponding coordinate
vector of the active block.

For \(n\ge1\), set \(S_n:=\sum_{r=0}^{n-1}V^rB\) and
\(G_n:=\vect_s(S_n)\).  
For \(k\ge0\), the cascade identity gives
\[
\vect_s(V^kB)(x)
=
\mathsf T_{d_1(x)}\cdots\mathsf T_{d_k(x)}b(R^kx),
\qquad x\in[0,1),
\]
where the empty product is the identity.  Summing over \(k\), we obtain
\[
G_n(x)
=
\sum_{k=0}^{n-1}
\mathsf T_{d_1(x)}\cdots\mathsf T_{d_k(x)}b(R^kx).
\]

This sum is evaluated efficiently by a backward Horner recursion.  For fixed \(n\), define
\(U_{n-1}(x):=b(R^{n-1}x)\), and then set
\begin{equation}
\label{eq:horner-recursion}
U_j(x)
:=
b(R^jx)+\mathsf T_{d_{j+1}(x)}U_{j+1}(x),
\qquad
j=n-2,\ldots,0.
\end{equation}

\begin{lemma}[Horner form of the forcing sum]
\label{lem:horner-form}
For every \(n\ge1\) and \(x\in[0,1)\), one has
\[
U_0(x)=G_n(x)=\vect_s(S_n)(x).
\]
\end{lemma}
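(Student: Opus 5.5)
The plan is a downward induction on the Horner index \(j\). For fixed \(n\ge1\) and \(x\in[0,1)\), we aim to identify each intermediate Horner state as a partial tail of the forcing sum,
\[
U_j(x)=\sum_{k=j}^{n-1}\mathsf T_{d_{j+1}(x)}\cdots\mathsf T_{d_k(x)}\,b(R^kx),
\qquad j=n-1,\ldots,0,
\]
where the product is empty (the identity) when \(k=j\). The lemma is the case \(j=0\). At that index the right-hand side is exactly the expansion of \(G_n(x)\) displayed just before \eqref{eq:horner-recursion}. That expansion came from summing the iterated cascade identity applied to \(f=V^kB\), and it uses \(T_{q_j(x)}=\mathsf T_{d_j(x)}\) from admissibility.

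The base case \(j=n-1\) holds by definition: \(U_{n-1}(x)=b(R^{n-1}x)\), and the tail sum has the single term \(k=n-1\) with empty product. For the inductive step, assume the claim at \(j+1\) and substitute it into \eqref{eq:horner-recursion}:
\[
U_j(x)=b(R^jx)+\mathsf T_{d_{j+1}(x)}\sum_{k=j+1}^{n-1}\mathsf T_{d_{j+2}(x)}\cdots\mathsf T_{d_k(x)}\,b(R^kx).
\]
Distributing the left factor \(\mathsf T_{d_{j+1}(x)}\) over the sum prepends it to each product. This turns the product into \(\mathsf T_{d_{j+1}(x)}\cdots\mathsf T_{d_k(x)}\) for \(k\ge j+1\). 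The isolated term \(b(R^jx)\) is then the \(k=j\) term with empty product, which closes the induction.

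The only point needing care is the bookkeeping of which digits appear at which Horner level. The recursion as written uses digits \(d_{j+1}(x)\) evaluated at the original point \(x\), not at \(R^jx\). So the matrix products must come out as consecutive runs \(d_{j+1},\ldots,d_k\) of the digit sequence of \(x\), matching the cascade formula \(\vect_s(V^kB)(x)=\mathsf T_{d_1(x)}\cdots\mathsf T_{d_k(x)}b(R^kx)\) exactly when \(j=0\). The induction hypothesis is stated in terms of \(x\) itself, so no shift identity \(d_i(Rx)=d_{i+1}(x)\) is needed. The boundary value \(x=1\) is excluded, in line with the half-open convention; equality there, if wanted, follows by continuity.
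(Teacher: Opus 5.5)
Your proof is correct and follows the paper's approach. The paper simply expands the recursion \eqref{eq:horner-recursion} into $b(x)+\sum_{k=1}^{n-1}\mathsf T_{d_1(x)}\cdots\mathsf T_{d_k(x)}b(R^kx)$ and matches this with the cascade expansion of $G_n$; your downward induction on the tail-sum formula for $U_j$ is a rigorous version of that expansion. One wording slip: the term $\vect_s(V^kB)(x)$ comes from the iterated cascade identity with $B$ in place of $\gamma$, not from applying it to $f=V^kB$.
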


\begin{proof}
Expanding the recursion gives
\[
U_0(x)
=
b(x)
+
\sum_{k=1}^{n-1}
\mathsf T_{d_1(x)}\cdots\mathsf T_{d_k(x)}b(R^kx),
\]
which is the preceding expression for \(G_n(x)\).
\end{proof}

The following elementary consequence will make the later selector ambiguity harmless.

\begin{lemma}[Vanishing accumulated state]
\label{lem:vanishing-accumulated-state}
Fix \(j\in\{0,\ldots,n-2\}\).  If \(b(R^ix)=0\) for every
\(i=j+1,\ldots,n-1\), then \(U_{j+1}(x)=0\).
\end{lemma}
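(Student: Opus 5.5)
The plan is a downward induction on the index \(i\), running from \(n-1\) to \(j+1\), that uses only the Horner recursion \eqref{eq:horner-recursion}. Fix \(x\in[0,1)\) and assume \(b(R^ix)=0\) for all \(i=j+1,\ldots,n-1\). The claim to be proved by induction is that \(U_i(x)=0\) for each \(i\) in this range.

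The base case is \(i=n-1\). Here the definition gives \(U_{n-1}(x)=b(R^{n-1}x)\), which is zero by hypothesis. This index lies in the range because \(j\le n-2\) implies \(j+1\le n-1\).

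For the inductive step, suppose \(j+1\le i\le n-2\) and \(U_{i+1}(x)=0\). The recursion gives
\[
U_i(x)=b(R^ix)+\mathsf T_{d_{i+1}(x)}U_{i+1}(x).
\]
The first term vanishes by hypothesis, since \(i\) lies in the range \(j+1,\ldots,n-1\). The second term is a fixed matrix applied to the zero vector, so it also vanishes. Hence \(U_i(x)=0\).

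Descending to \(i=j+1\) gives the claim \(U_{j+1}(x)=0\). In the edge case \(j=n-2\), the index \(j+1\) equals \(n-1\), so the base case alone already settles it.

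No step here poses a real obstacle. The only point to keep straight is the indexing: the hypothesis must cover every \(i\) from \(j+1\) up to \(n-1\), including the terminal index \(n-1\) where \(U_{n-1}\) is defined without a matrix term.
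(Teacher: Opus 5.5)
Your proof is correct and follows the paper's approach: start from \(U_{n-1}(x)=b(R^{n-1}x)=0\) and run backward induction through the Horner recursion \eqref{eq:horner-recursion}, with each forcing term vanishing by hypothesis and each matrix term acting on a zero vector. The paper compresses this into a single sentence, and your version spells out the same induction with the indexing made explicit.
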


\begin{proof}
Starting from \(U_{n-1}(x)=b(R^{n-1}x)=0\), the conclusion follows by backward induction from
\eqref{eq:horner-recursion}.
\end{proof}

\subsection{The residual memory controller}
\label{subsec:residual-memory-controller}

The affine Horner recursion requires the residual states in reverse chronological order.  Since the
residual map is \(M\)-to-one on the circle, \(R^j x\) cannot be recovered continuously from
\(R^{j+1}x\) alone.  We therefore augment the residual loop by a memory coordinate and replace its
noninjective dynamics by an injective skew-product.

Let \(\mathbb T:=\R/\Z\), and choose a simple polygonal embedding
\(E:\mathbb T\to\Gamma\subset[-1,1]^2\).  We take \(E\) to be CPwL with respect to a finite
subdivision of the circle.  Define the inverse-branch separation constant by
\begin{equation}
\label{eq:branch-separation}
\Delta_E
:=
\min_{\substack{1\le a\le M-1\\ t\in\mathbb T}}
\|E(t+a/M)-E(t)\|_\infty .
\end{equation}
For each \(a=1,\ldots,M-1\), the points \(t\) and \(t+a/M\) are distinct on \(\mathbb T\).
Since \(E\) is an embedding and the minimum is taken over a compact set, one has
\(\Delta_E>0\).

Following \cite{loop}, let the degree-\(M\) circle map \(t\mapsto Mt \pmod 1\) induce
\(F_\Gamma:\Gamma\to\Gamma\) through
\[
F_\Gamma(E(t)):=E(Mt).
\]
After subdividing \(\Gamma\) at the images under \(E\) of the breakpoints of \(E\) and their
preimages under the degree-\(M\) circle map, \(F_\Gamma\) is affine on each resulting edge.
Extending this finite polyhedral complex to a triangulation of a sufficiently large polygon gives a
global CPwL extension \(F:\R^2\to\R^2\).

Set \(C:=[-1,1]^2\) and \(X:=\Gamma\times C\subset\R^4\).  Choose
\(\alpha,\beta>0\) so that
\begin{equation}
\label{eq:memory-parameters}
\alpha+\beta\le1,
\qquad
2\alpha<\beta\Delta_E.
\end{equation}
Such a choice is possible by fixing any \(\beta\in(0,1)\) and then choosing
\[
0<\alpha<\min\{1-\beta,\beta\Delta_E/2\}.
\]
Define the memory update by
\begin{equation}
\label{eq:memory-map}
\mathcal Q(r,y)
:=
\bigl(F(r),\,\beta r+\alpha y\bigr),
\qquad
(r,y)\in\R^2\times\R^2.
\end{equation}
For \(x\in[0,1]\), define the valid memory states by
\[
Z_0(x):=(E(x),0),
\qquad
Z_j(x):=\mathcal Q^j(Z_0(x)).
\]
In particular, the first coordinate of \(Z_j(x)\) is \(E(M^jx)\), with the argument understood
modulo \(1\).

\begin{lemma}[Injective residual memory map]
\label{lem:memory-injective}
The memory update has the following properties.
\begin{enumerate}[label=(\roman*),leftmargin=2em]
    \item \(\mathcal Q(X)\subset X\).
    \item The restriction \(\mathcal Q|_X\) is injective.
    \item The restriction \(\mathcal Q|_X\) is piecewise affine on a finite polyhedral complex.
    \item Its inverse on \(\mathcal Q(X)\) is piecewise affine and admits a global CPwL extension
    \(P:\R^4\to\R^4\).  In particular,
    \[
    P(\mathcal Q(Z))=Z,
    \qquad Z\in X.
    \]
\end{enumerate}
\end{lemma}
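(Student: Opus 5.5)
We verify the four items in order. The key idea is that the second component \(\beta r+\alpha y\) records the previous position \(r\) up to an error of size \(\alpha\). That error is small compared with the separation \(\beta\Delta_E\) between the \(M\) preimages of a point on \(\Gamma\).

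\emph{Invariance (i).} For \((r,y)\in X\) we have \(r\in\Gamma\), so \(F(r)=F_\Gamma(r)\in\Gamma\). Moreover \(\|\beta r+\alpha y\|_\infty\le\beta+\alpha\le1\) by \eqref{eq:memory-parameters}, since \(\Gamma\subset[-1,1]^2\) and \(y\in C\). Hence \(\mathcal Q(r,y)\in X\).

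\emph{Injectivity (ii).} Suppose \(\mathcal Q(r,y)=\mathcal Q(r',y')\) with \((r,y),(r',y')\in X\), and write \(r=E(t)\), \(r'=E(t')\). Equality of the first components gives \(E(Mt)=E(Mt')\), so \(Mt\equiv Mt'\pmod 1\), because \(E\) is injective. Thus \(t'=t+a/M\) for some \(a\in\{0,\ldots,M-1\}\). Equality of the second components gives \(\beta(r-r')=\alpha(y'-y)\), so \(\beta\|r-r'\|_\infty\le2\alpha\). If \(a\ne0\), then \eqref{eq:branch-separation} gives \(\|r-r'\|_\infty\ge\Delta_E\), and so \(\beta\Delta_E\le2\alpha\), contradicting \eqref{eq:memory-parameters}. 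Hence \(a=0\), so \(r=r'\), and then \(y=y'\) because \(\alpha>0\).

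\emph{Piecewise affinity (iii).} \(X\) is the product of the polygonal curve \(\Gamma\) (after the subdivision on whose edges \(F_\Gamma\) is affine) with the square \(C\). The cells \(e\times C\) form a finite polyhedral complex. On each cell \(\mathcal Q\) is affine: the first component is affine in \(r\) on the edge \(e\), and the second is linear.

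\emph{Inverse (iv).} This is the step needing most care. The image \(\mathcal Q(e\times C)\) of each cell is the polytope \(F(e)\times(\beta e+\alpha C)\), where \(F(e)\) is an edge of \(\Gamma\). On it, \(\mathcal Q|_{e\times C}\) is an affine bijection \(e\times C\to\mathcal Q(e\times C)\): the edge coordinate of \(r\) is recovered from \(F(r)\) because \(F\) is affine and nondegenerate on \(e\), the map \(t\mapsto Mt\) being locally expanding and \(E\) injective; then \(y=(z-\beta r)/\alpha\). So the inverse is affine on each image cell. These image cells are finitely many polytopes, and by (ii) they overlap only where the affine inverses agree. The pieces therefore glue to a continuous map on \(\mathcal Q(X)\). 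After a common refinement this map is piecewise affine on a finite polyhedral complex, and Remark~\ref{rem:cpwl-basic}(iii) supplies a global CPwL extension \(P:\R^4\to\R^4\) with \(P(\mathcal Q(Z))=Z\) for \(Z\in X\).

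The one genuinely delicate point is continuity of the inverse across overlaps of images of distinct cells. Injectivity already forces the inverse to be single-valued there. Continuity then follows because \(\mathcal Q|_X\) is a continuous injection of a compact set, hence a homeomorphism onto its image.
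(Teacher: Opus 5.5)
Your proof is correct and follows the paper's argument essentially step by step: the same norm bound gives invariance, the same $\Delta_E$ contradiction gives injectivity, $\mathcal Q$ is affine on the cells of $\Gamma\times C$, and injectivity makes the cellwise affine inverses agree on overlaps before the global CPwL extension $P$ (the paper triangulates into simplices and notes that images of cells meet exactly in the image of their intersection, so no further refinement is needed). One slip to fix: $\mathcal Q(e\times C)$ is the sheared prism $\{(F(r),\beta r+\alpha y): r\in e,\ y\in C\}$, not the product $F(e)\times(\beta e+\alpha C)$, which is strictly larger; your argument only uses that the image is an affine image of $e\times C$ on which $\mathcal Q$ is injective, so nothing else changes.
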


\begin{proof}
Let \((r,y)\in X\).  Since \(r\in\Gamma\), one has \(F(r)=F_\Gamma(r)\in\Gamma\).  Moreover,
\[
\|\beta r+\alpha y\|_\infty
\le
\beta\|r\|_\infty+\alpha\|y\|_\infty
\le
\alpha+\beta
\le1,
\]
and hence \(\mathcal Q(r,y)\in X\).  This proves (i).

To prove injectivity, suppose that
\(\mathcal Q(E(t),y)=\mathcal Q(E(u),\widetilde y)\).  Equality of the first coordinates gives
\(E(Mt)=E(Mu)\).  Since \(E\) is injective on \(\mathbb T\), it follows that
\(u\equiv t+a/M\pmod1\) for some \(a\in\{0,\ldots,M-1\}\).

If \(a=0\), then \(E(u)=E(t)\), and equality of the second coordinates gives
\(y=\widetilde y\).  If \(a\ne0\), the definition of \(\Delta_E\) and equality of the memory
coordinates imply
\[
\beta\Delta_E
\le
\beta\|E(t)-E(u)\|_\infty
=
\alpha\|\widetilde y-y\|_\infty
\le
2\alpha,
\]
contradicting \eqref{eq:memory-parameters}.  Thus \(a=0\), proving (ii).

For (iii), subdivide \(\Gamma\) so that \(F_\Gamma\) is affine on every edge, triangulate \(C\),
and triangulate the resulting product cells in \(X=\Gamma\times C\).  On every simplex of this
finite complex, both \(r\mapsto F(r)\) and \((r,y)\mapsto\beta r+\alpha y\) are affine.

Finally, refine this complex if necessary so that \(\mathcal Q\) is affine on every simplex.
Because \(\mathcal Q|_X\) is globally injective, the images of two simplices intersect only in the
image of their intersection.  The image simplices therefore form a finite polyhedral complex on
\(\mathcal Q(X)\), and the inverse is affine on each image simplex.  These affine pieces agree on
their common faces, so the inverse \(P:\mathcal Q(X)\to X\) is piecewise affine.

Choose a sufficiently large box \(K\subset\R^4\) containing \(\mathcal Q(X)\) in its interior, and
extend the image complex to a finite triangulation of \(K\).  Assign the prescribed values on the
vertices of \(\mathcal Q(X)\), assign zero values on the boundary vertices of \(K\), and choose
arbitrary values at the remaining interior vertices.  Affine extension over the simplices, followed
by the zero extension outside \(K\), gives a global CPwL map \(P:\R^4\to\R^4\).
\end{proof}

The following is immediate.

\begin{corollary}[Exact backward replay]
\label{cor:backward-playback}
For every \(0\le j\le n\), one has
\[
P^j(Z_n(x))=Z_{n-j}(x).
\]
\end{corollary}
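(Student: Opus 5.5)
The plan is a short induction on \(j\), resting only on part (iv) of \cref{lem:memory-injective} and on the invariance in part (i).

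First we check that every valid memory state lies in \(X\). For \(x\in[0,1]\) the initial state \(Z_0(x)=(E(x),0)\) has first coordinate \(E(x)\in\Gamma\) and second coordinate \(0\in C=[-1,1]^2\), so \(Z_0(x)\in X\). Applying \cref{lem:memory-injective}(i) repeatedly, we get \(Z_m(x)=\mathcal Q^m(Z_0(x))\in X\) for every \(m\ge0\). By definition, \(Z_m(x)=\mathcal Q(Z_{m-1}(x))\) for \(m\ge1\).

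Next comes the induction on \(j\) for fixed \(n\). The case \(j=0\) is trivial, since \(P^0\) is the identity. Suppose \(P^{j}(Z_n(x))=Z_{n-j}(x)\) for some \(j<n\). Then
\[
P^{j+1}(Z_n(x))=P\bigl(Z_{n-j}(x)\bigr)=P\bigl(\mathcal Q(Z_{n-j-1}(x))\bigr)=Z_{n-j-1}(x),
\]
where the last equality is the identity \(P(\mathcal Q(Z))=Z\) from \cref{lem:memory-injective}(iv), applied with \(Z=Z_{n-j-1}(x)\in X\). This closes the induction and gives \(P^j(Z_n(x))=Z_{n-j}(x)\) for all \(0\le j\le n\).

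There is no real obstacle here. The only point to watch is that \(P\) is applied only at points of \(\mathcal Q(X)\), where it genuinely inverts \(\mathcal Q\). The global CPwL extension of \(P\) is otherwise arbitrary, so we need every intermediate argument \(Z_{n-j}(x)\) with \(j<n\) to be a \(\mathcal Q\)-image of a point of \(X\). The invariance statement (i) guarantees exactly this, so the extension's behaviour off \(\mathcal Q(X)\) never enters the argument.
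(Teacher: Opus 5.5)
Your proof is correct and matches the paper's intended argument: the paper states the corollary as immediate from Lemma~\ref{lem:memory-injective}, and your induction spells out exactly that. Invariance (i) keeps every $Z_m(x)$ in $X$, and the left-inverse identity $P(\mathcal Q(Z))=Z$ from (iv) then removes one application of $\mathcal Q$ at each step.
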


\subsection{Lifted forcing readouts}

The loop state records the residual point on \(\mathbb T\), but no single continuous scalar readout
on \(\Gamma\) can recover its representative in \([0,1]\) across the seam.  Following the
loop-coordinate construction of \cite{loop}, we therefore use two complementary seam-compatible
readouts.

Fix \(0<\varepsilon<\mar\), and define \(r^-,r^+:[0,1]\to[0,1]\) by
\[
r^-(t)
=
\begin{cases}
t, & 0\le t\le1-\varepsilon,\\[1mm]
\dfrac{1-\varepsilon}{\varepsilon}(1-t),
   & 1-\varepsilon\le t\le1,
\end{cases}
\]
and
\[
r^+(t)
=
\begin{cases}
1-\dfrac{1-\varepsilon}{\varepsilon}t,
   & 0\le t\le\varepsilon,\\[1mm]
t, & \varepsilon\le t\le1.
\end{cases}
\]
Since \(r^-(0)=r^-(1)=0\) and \(r^+(0)=r^+(1)=1\), these functions define CPwL readouts
\(\rho^\pm:\Gamma\to[0,1]\) through
\(\rho^\pm(E(t)):=r^\pm(t)\).  We fix global CPwL extensions
\(\rho^\pm:\R^2\to\R\).

For a special hat \(h\), define the lifted scalar readout
\[
H_h(r,y)
:=
\min\bigl\{h(\rho^-(r)),h(\rho^+(r))\bigr\},
\qquad
(r,y)\in\R^2\times\R^2.
\]

\begin{lemma}[Lifted special-hat readout]\label{lem:lifted-special-hat-readout}
The map \(H_h:\R^4\to\R\) is CPwL and satisfies
\[
H_h(E(t),y)=h(t),
\qquad
t\in[0,1],\quad y\in\R^2.
\]
Consequently, for every \(x\in[0,1)\) and \(j\ge0\), one has
\[
H_h(Z_j(x))=h(R^jx).
\]
\end{lemma}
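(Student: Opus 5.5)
The plan is to verify the identity on the loop \(\Gamma\) first, and then transfer it to the memory states \(Z_j(x)\) using that their first coordinate is \(E(M^jx)\). CPwL-ness of \(H_h\) is immediate. The maps \(\rho^\pm\) are fixed global CPwL extensions, \(h\) is CPwL on \(\R\), and \(\min\{a,b\}=a-\ReLU(a-b)\) is CPwL. By Remark~\ref{rem:cpwl-basic}(iv) the composition \((r,y)\mapsto\min\{h(\rho^-(r)),h(\rho^+(r))\}\) is therefore CPwL on \(\R^4\). It is independent of \(y\), so the memory coordinate plays no role in the identity.

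For the identity \(H_h(E(t),y)=h(t)\), fix \(t\in[0,1]\) and use \(\rho^\pm(E(t))=r^\pm(t)\). Note that \(E(0)=E(1)\) and \(r^\pm(0)=r^\pm(1)\), so \(\rho^\pm\) is well defined on \(\Gamma\). I would split into three cases according to the position of \(t\).
\begin{enumerate}[label=(\alph*),leftmargin=2em]
\item If \(\varepsilon\le t\le1-\varepsilon\), then \(r^-(t)=r^+(t)=t\), and the minimum is \(h(t)\).
\item If \(t\in[0,\varepsilon)\), then \(r^-(t)=t\), so the first entry is \(h(t)\). Since \(\varepsilon<\mar\), we have \(h(t)=0\) by \(\supp h\subset[\mar,1-\mar]\). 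The second entry is \(h(r^+(t))\ge0\), because \(h\) is nonnegative. Hence the minimum is \(0=h(t)\).
\item If \(t\in(1-\varepsilon,1]\), the argument is symmetric. Here \(r^+(t)=t\), so \(h(r^+(t))=h(t)=0\), while \(h(r^-(t))\ge0\), and again the minimum is \(0=h(t)\).
\end{enumerate}
This is exactly where the two defining features of a special hat enter: nonnegativity makes the \(\min\) select the correct branch, and the margin \(\mar>\varepsilon\) makes \(h\) vanish on the seam strips where one readout is distorted.

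For the consequence, fix \(x\in[0,1)\) and \(j\ge0\). By the definition of \(\mathcal Q\), the first coordinate of \(Z_j(x)\) is \(F^j(E(x))\). Since \(E(x)\in\Gamma\) and \(F\) agrees with \(F_\Gamma\) on \(\Gamma\), induction gives \(F^j(E(x))=E(M^jx \bmod 1)\). Under admissibility \(R(x)=Mx-\lfloor Mx\rfloor\), so \(R^jx\) is the representative in \([0,1)\) of \(M^jx \bmod 1\). Hence \(Z_j(x)=(E(R^jx),y_j)\) for some \(y_j\in C\), and the first part yields \(H_h(Z_j(x))=h(R^jx)\).

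The only potential subtlety is the seam case analysis. It needs the strict inequality \(\varepsilon<\mar\) together with \(h\ge0\), and both are available, so I expect no real obstacle.
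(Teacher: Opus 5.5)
Your proof is correct and follows the paper's argument essentially step for step. You use the same case split into \([\varepsilon,1-\varepsilon]\) and the two seam strips, rely on \(\varepsilon<\mar\) and \(h\ge0\) to make the minimum select \(h(t)\), and then identify the first coordinate of \(Z_j(x)\). Your explicit note that admissibility is what gives \(E(M^jx \bmod 1)=E(R^jx)\) is a welcome clarification of a step the paper leaves implicit.
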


\begin{proof}
The map \(H_h\) is CPwL because compositions and pointwise minima of scalar CPwL functions are
CPwL.  If \(t\in[\varepsilon,1-\varepsilon]\), then
\(r^-(t)=r^+(t)=t\), and the identity is immediate.

If \(t\in[0,\varepsilon]\), then \(r^-(t)=t\) and \(h(t)=0\), since
\(\varepsilon<\mar\) and \(\supp h\subset[\mar,1-\mar]\).  The nonnegativity of \(h\) therefore
gives
\[
\min\bigl\{h(r^-(t)),h(r^+(t))\bigr\}=0=h(t).
\]
The same argument applies on \([1-\varepsilon,1]\), using \(r^+(t)=t\).
This proves the first identity.

The first coordinate of \(Z_j(x)\) is \(E(M^jx)=E(R^jx)\).  Substitution into the first identity
therefore yields the valid-state formula.
\end{proof}

Using the frame-aligned expansion
\eqref{eq:frame-aligned-vectorized-forcing}, define
\[
\mathcal B(Z)
:=
\sum_{\nu=1}^{N_B}a_\nu H_{h_\nu}(Z)v_\nu,
\qquad
Z\in\R^4.
\]

\begin{corollary}[Exact lifted forcing readout]
\label{cor:exact-lifted-forcing-readout}
The map \(\mathcal B:\R^4\to\R^{pL_s}\) is CPwL and, for every
\(x\in[0,1)\) and \(j\ge0\), satisfies
\[
\mathcal B(Z_j(x))=b(R^jx).
\]
\end{corollary}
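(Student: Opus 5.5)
The corollary follows by linearity from \Cref{lem:lifted-special-hat-readout}. There are two claims: that \(\mathcal B\) is CPwL, and that it reproduces \(b\) along valid states.

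\emph{Step 1: \(\mathcal B\) is CPwL.} By \Cref{lem:lifted-special-hat-readout}, each \(H_{h_\nu}:\R^4\to\R\) is CPwL. The map \(Z\mapsto H_{h_\nu}(Z)v_\nu\) multiplies this scalar CPwL function by a fixed vector in \(\R^{pL_s}\), so each of its components is CPwL. A finite linear combination of CPwL maps is again CPwL. To see this, on a compact set take a common refinement of the finitely many polyhedral subdivisions; each summand is affine on every cell of the refinement, hence so is the sum. Therefore \(\mathcal B\) is CPwL.

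\emph{Step 2: the readout identity.} Fix \(x\in[0,1)\) and \(j\ge0\). The valid-state formula of \Cref{lem:lifted-special-hat-readout} applies to each special hat \(h_\nu\), since each \(h_\nu\) is nonnegative with \(\supp h_\nu\subset[\mar,1-\mar]\) and \(\varepsilon<\mar\) was fixed once for all hats. It gives
\[
H_{h_\nu}(Z_j(x))=h_\nu(R^jx).
\]
Substituting into the definition of \(\mathcal B\) and using the frame-aligned expansion \eqref{eq:frame-aligned-vectorized-forcing} at the point \(R^jx\in[0,1)\), we get
\[
\mathcal B(Z_j(x))=\sum_{\nu=1}^{N_B}a_\nu h_\nu(R^jx)v_\nu=b(R^jx).
\]

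\emph{The only point to check} is that the coefficients \(a_\nu\) may have any sign, while the lemma needs nonnegativity of \(h_\nu\) itself. This causes no difficulty: the lemma is applied hat by hat, and the coefficients enter only afterwards through the linear combination. Hence the identity holds with no further conditions.
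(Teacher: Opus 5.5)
Your proposal is correct and matches the paper's proof, which simply applies \Cref{lem:lifted-special-hat-readout} termwise in the frame-aligned expansion \eqref{eq:frame-aligned-vectorized-forcing}. Your extra remarks are correct and harmless elaborations: closure of CPwL maps under finite linear combinations, the common choice \(\varepsilon<\mar\) for all hats, and the fact that the signs of the coefficients \(a_\nu\) play no role.
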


\begin{proof}
The claim follows by applying
\Cref{lem:lifted-special-hat-readout} termwise in
\eqref{eq:frame-aligned-vectorized-forcing}.
\end{proof}

\subsection{Selectors and the selected matrix action}
\label{subsec:memory-selectors}

The digit indicators are discontinuous at the \(M\)-ary breakpoints and therefore cannot be
represented exactly by continuous loop readouts.  We replace them by CPwL selectors that are exact
outside small transition intervals.  On those intervals, exactness of the matrix action will follow
from the vanishing of the accumulated Horner state.

Fix \(0<\bar\delta<1\).  For a prescribed final depth \(n\ge1\), set
\(\delta_n:=\bar\delta\,\mar M^{-(n+1)}\) and define the transition set
\[
J_n
:=
\bigcup_{k=0}^{M-1}
\left[\frac{k}{M},\frac{k}{M}+\delta_n\right].
\]
Since \(\delta_n<1/M\), these intervals are pairwise disjoint and contained in \([0,1)\).

Let
\(\vartheta^{(n)}=(\vartheta^{(n)}_0,\ldots,\vartheta^{(n)}_{M-1})\) be the continuous
piecewise affine map from \([0,1]\) into the standard simplex defined as follows.  On each
transition interval \([k/M,k/M+\delta_n]\), it interpolates linearly from the coordinate vector
corresponding to \(k-1\pmod M\) to the coordinate vector corresponding to \(k\).  Between
successive transition intervals, it is equal to the coordinate vector corresponding to the current
digit.  Thus we have
\[
\sum_{r=0}^{M-1}\vartheta^{(n)}_r(t)=1,
\qquad
0\le \vartheta^{(n)}_r(t)\le1,
\]
and, whenever \(t\in[0,1)\setminus J_n\), one has
\[
\vartheta^{(n)}_r(t)
=
\begin{cases}
1, & r=\lfloor Mt\rfloor,\\
0, & r\ne\lfloor Mt\rfloor.
\end{cases}
\]
At the endpoints, this construction gives
\(\vartheta^{(n)}(0)=\vartheta^{(n)}(1)=e_{M-1}\).  Hence the selectors descend continuously to
the loop through
\[
\chi^{(n)}_r(E(t))
:=
\vartheta^{(n)}_r(t).
\]
We fix global CPwL extensions to \(\R^2\), and then lift them to the memory space by setting
\(\chi^{(n)}_r(r,y):=\chi^{(n)}_r(r)\).

We use the following elementary exact gate, which is a simpler variant of the product gadget
introduced in \cite[Lemma~9]{source} and used in \cite{loop}.  For \(a>0\),
\(\lambda\in\R\), and \(Y\in\R^{pL_s}\), define
\[
\Pi_a(\lambda,Y)
:=
\ReLU\bigl(Y-a(1-\lambda)\mathbf1\bigr)
-
\ReLU\bigl(-Y-a(1-\lambda)\mathbf1\bigr),
\]
where \(\mathbf1\in\R^{pL_s}\) denotes the all-ones vector, and all ReLU operations are applied componentwise.
For
\(\lambda\in[0,1]\) and \(Y\in[-a,a]^{pL_s}\), one has
\[
\Pi_a(1,Y)=Y,
\qquad
\Pi_a(0,Y)=0,
\qquad
\Pi_a(\lambda,0)=0.
\]

To choose the scale, set
\[
B_*:=\sup_{t\in[0,1]}\|b(t)\|_\infty,
\qquad
\tau:=\max_{0\le r\le M-1}
\|\mathsf T_r\|_{\infty\to\infty}.
\]
The Horner recursion gives
\[
\|U_j(x)\|_\infty
\le
B_*\sum_{\ell=0}^{n-1-j}\tau^\ell.
\]
We may therefore take
\[
a_n
:=
1+\tau B_*\sum_{\ell=0}^{n-1}\tau^\ell,
\]
so that \(\|\mathsf T_rU_j(x)\|_\infty\le a_n\) for every \(r,j\), and \(x\).  In particular,
there are constants \(C,\Lambda>0\), depending only on the fixed data, such that
\(a_n\le C\Lambda^n\).

Define the selected matrix action by
\[
\mathcal T_n(Z,U)
:=
\sum_{r=0}^{M-1}
\Pi_{a_n}\bigl(\chi^{(n)}_r(Z),\mathsf T_rU\bigr).
\]

\begin{lemma}[Exact selected matrix action on Horner states]
\label{lem:selected-matrix-action}
For every \(j=0,\ldots,n-2\) and \(x\in[0,1)\), one has
\[
\mathcal T_n\bigl(Z_j(x),U_{j+1}(x)\bigr)
=
\mathsf T_{d_{j+1}(x)}U_{j+1}(x).
\]
\end{lemma}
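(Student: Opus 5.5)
The plan is to reduce the claim to a pointwise statement about the single residual point \(t:=R^jx\in[0,1)\), and then split into two cases according to whether \(t\) lies in the transition set \(J_n\).

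\emph{Reduction.} By construction, the first coordinate of \(Z_j(x)\) is \(E(R^jx)\). The lifted selectors ignore the memory coordinate, so
\[
\chi^{(n)}_r(Z_j(x))=\vartheta^{(n)}_r(t).
\]
Moreover, \(d_{j+1}(x)=q_{j+1}(x)-c=\lfloor Mt\rfloor\), by admissibility of \(s\). Two facts about the gate \(\Pi_{a_n}\) will be used throughout:
\begin{itemize}
\item every selector value \(\vartheta^{(n)}_r(t)\) lies in \([0,1]\);
\item \(\|\mathsf T_rU_{j+1}(x)\|_\infty\le a_n\) for every \(r\), by the choice of \(a_n\).
\end{itemize}
Together these place every argument pair inside the range where the identities \(\Pi_a(1,Y)=Y\), \(\Pi_a(0,Y)=0\) and \(\Pi_a(\lambda,0)=0\) hold.

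\emph{Case 1: \(t\notin J_n\).} Here the selector is exactly one-hot on the digit \(\lfloor Mt\rfloor=d_{j+1}(x)\). Using \(\Pi_{a_n}(1,Y)=Y\) and \(\Pi_{a_n}(0,Y)=0\), the sum defining \(\mathcal T_n\) collapses to the single term \(\mathsf T_{d_{j+1}(x)}U_{j+1}(x)\), which is the claim.

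\emph{Case 2: \(t\in J_n\).} This is the only real step. Say \(t\in[k/M,k/M+\delta_n]\). Then \(\lfloor Mt\rfloor=k\), and so \(R^{j+1}x=Mt-k\in[0,M\delta_n]\). I will show by induction on \(i=j+1,\ldots,n-1\) that
\[
R^ix\in[0,M^{i-j}\delta_n].
\]
The induction works as long as \(M^{i-j}\delta_n<1/M\), because then the next residual is simply multiplication by \(M\) with no wrap-around. This condition holds: since \(i-j\le n-1\), we have
\[
M^{i-j}\delta_n\le M^{n-1}\bar\delta\,\mar M^{-(n+1)}=\bar\delta\,\mar M^{-2}<\mar .
\]
Hence every \(R^ix\) with \(j+1\le i\le n-1\) lies in \([0,\mar)\). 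Each \(h_\nu\) is supported in \([\mar,1-\mar]\), so by the expansion \eqref{eq:frame-aligned-vectorized-forcing} we get \(b(R^ix)=0\) for all these \(i\). \Cref{lem:vanishing-accumulated-state} then gives \(U_{j+1}(x)=0\). Consequently every term \(\Pi_{a_n}(\chi^{(n)}_r(Z_j(x)),\mathsf T_r\cdot0)\) vanishes by \(\Pi_a(\lambda,0)=0\), and the right-hand side is \(\mathsf T_{d_{j+1}(x)}\cdot 0=0\) as well.

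The main obstacle is precisely this bookkeeping in Case 2: checking that the choice of \(\delta_n\), with its extra factor \(M^{-(n+1)}\) and margin \(\mar\), keeps all later residuals \(R^{j+1}x,\ldots,R^{n-1}x\) inside the zero set \([0,\mar)\) of \(b\). The selector is wrong only on \(J_n\), and this shows it only errs where the accumulated Horner state has already vanished. The remaining detail to confirm is that the range \(j\le n-2\) makes the index set \(\{j+1,\ldots,n-1\}\) nonempty, which is exactly what \Cref{lem:vanishing-accumulated-state} requires.
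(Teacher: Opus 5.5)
Your proposal is correct and follows the paper's proof essentially step for step: you use the same case split on whether \(t=R^jx\) lies in \(J_n\), the same one-hot selection via \(\Pi_{a_n}(1,Y)=Y\) and \(\Pi_{a_n}(0,Y)=0\) off \(J_n\), and on \(J_n\) the same trapping of \(R^{j+1}x,\ldots,R^{n-1}x\) in \([0,\mar)\), so that \Cref{lem:vanishing-accumulated-state} forces \(U_{j+1}(x)=0\). One small nit: the no-wrap-around condition you need is \(M^{i-j}\delta_n<1/M\), and this follows from your intermediate bound \(\bar\delta\,\mar M^{-2}\) (since \(\bar\delta\,\mar<1\)), not from the final inequality \(<\mar\), because \(\mar\) may exceed \(1/M\).
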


\begin{proof}
Set \(t:=R^jx\).  If \(t\notin J_n\), then the selectors are exact and
\(d_{j+1}(x)=\lfloor Mt\rfloor\).  Exactly one selector equals \(1\), while all the others vanish,
so the gating identities give
\[
\mathcal T_n\bigl(Z_j(x),U_{j+1}(x)\bigr)
=
\mathsf T_{d_{j+1}(x)}U_{j+1}(x).
\]

Suppose now that \(t\in J_n\).  Then
\(t=k/M+\delta\) for some \(k\in\{0,\ldots,M-1\}\) and
\(0\le\delta\le\delta_n\).  Since \(\delta_n<1/M\), the first residual is
\(R(t)=M\delta\).  Moreover, for every \(i=j+1,\ldots,n-1\), no wrap-around occurs and
\[
R^ix
=
M^{\,i-j}\delta
\le
M^{n-1}\delta_n
=
\bar\delta\,\mar M^{-2}
<
\mar.
\]
Every special hat in
\eqref{eq:frame-aligned-vectorized-forcing} vanishes on \([0,\mar)\), and hence
\(b(R^ix)=0\) for \(i=j+1,\ldots,n-1\).  By
\Cref{lem:vanishing-accumulated-state}, it follows that \(U_{j+1}(x)=0\).

Consequently, \(\mathsf T_rU_{j+1}(x)=0\) for every branch \(r\).  The identity
\(\Pi_a(\lambda,0)=0\) then gives
\[
\mathcal T_n\bigl(Z_j(x),U_{j+1}(x)\bigr)
=
0
=
\mathsf T_{d_{j+1}(x)}U_{j+1}(x),
\]
which completes the proof.
\end{proof}

\subsection{The backward memory-Horner network}

We now combine exact backward replay, the lifted forcing readout, and the selected matrix action.
First compute the terminal memory state \(Z_n(x)=\mathcal Q^n(Z_0(x))\), and initialize
\[
\widehat Z_{n-1}(x):=P(Z_n(x)),
\qquad
\widehat U_{n-1}(x):=\mathcal B(\widehat Z_{n-1}(x)).
\]
For \(j=n-2,\ldots,0\), define the backward recursion by
\[
\widehat Z_j(x):=P(\widehat Z_{j+1}(x)),
\qquad
\widehat U_j(x)
:=
\mathcal B(\widehat Z_j(x))
+
\mathcal T_n(\widehat Z_j(x),\widehat U_{j+1}(x)).
\]

\begin{lemma}[Exactness of the backward memory-Horner recursion]
\label{lem:memory-horner-exactness}
For every \(j=0,\ldots,n-1\) and \(x\in[0,1)\), one has
\[
\widehat Z_j(x)=Z_j(x),
\qquad
\widehat U_j(x)=U_j(x).
\]
In particular, \(\widehat U_0(x)=\vect_s(S_n)(x)\).
\end{lemma}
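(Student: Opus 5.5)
We argue by backward induction on \(j\), from \(j=n-1\) down to \(j=0\), proving the two identities together. The memory identity \(\widehat Z_j(x)=Z_j(x)\) does not involve \(\widehat U\) and can be settled first. Since the recursion sets \(\widehat Z_j=P^{\,n-j}(Z_n(x))\), it is exactly \Cref{cor:backward-playback} with \(n-j\) in place of \(j\). The only point to check is that every iterate stays in the valid region \(X\). This holds because \(Z_0(x)=(E(x),0)\in X\), and \(\mathcal Q(X)\subset X\) by \Cref{lem:memory-injective}(i). Hence every \(Z_i(x)\in X\), and \(P(Z_{i+1}(x))=P(\mathcal Q(Z_i(x)))=Z_i(x)\) by \Cref{lem:memory-injective}(iv).

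For the Horner states, the base case is \(j=n-1\). Here \(\widehat U_{n-1}(x)=\mathcal B(\widehat Z_{n-1}(x))=\mathcal B(Z_{n-1}(x))\). By \Cref{cor:exact-lifted-forcing-readout} this equals \(b(R^{n-1}x)=U_{n-1}(x)\).

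For the inductive step, fix \(j\le n-2\) and assume \(\widehat U_{j+1}(x)=U_{j+1}(x)\). Substituting \(\widehat Z_j(x)=Z_j(x)\) and the inductive hypothesis into the recursion gives
\[
\widehat U_j(x)=\mathcal B(Z_j(x))+\mathcal T_n\bigl(Z_j(x),U_{j+1}(x)\bigr).
\]
The first term equals \(b(R^jx)\) by \Cref{cor:exact-lifted-forcing-readout}. The second equals \(\mathsf T_{d_{j+1}(x)}U_{j+1}(x)\) by \Cref{lem:selected-matrix-action}. Their sum is \(U_j(x)\) by \eqref{eq:horner-recursion}.

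The final claim \(\widehat U_0(x)=\vect_s(S_n)(x)\) then follows from \Cref{lem:horner-form}.

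Substantively, the step requiring care is the selected matrix action. Its validity depends on evaluating \(\mathcal T_n\) at the true Horner state \(U_{j+1}(x)\) and not some approximation. Only then does the transition-set argument apply: there the state vanishes, and elsewhere the bound \(\|\mathsf T_rU_{j+1}\|_\infty\le a_n\) makes the gates exact. This is exactly why the induction must carry \(\widehat U_{j+1}=U_{j+1}\) before \(\mathcal T_n\) is applied. With that ordering, \Cref{lem:selected-matrix-action} applies verbatim, and no further estimate is needed.
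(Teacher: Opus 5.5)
Your proof is correct and follows the paper's argument: a backward induction that uses exact backward replay for the memory states, then \Cref{cor:exact-lifted-forcing-readout} and \Cref{lem:selected-matrix-action} for the Horner states, and finally \Cref{lem:horner-form} for the last identity. The only cosmetic difference is that you settle the memory identity all at once via \Cref{cor:backward-playback}, with an explicit check that the states stay in \(X\), rather than carrying it inside the joint induction as the paper does.
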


\begin{proof}
By exact backward replay, we have
\(\widehat Z_{n-1}(x)=P(Z_n(x))=Z_{n-1}(x)\).  The lifted forcing readout then gives
\[
\widehat U_{n-1}(x)
=
\mathcal B(Z_{n-1}(x))
=
b(R^{n-1}x)
=
U_{n-1}(x).
\]

Suppose that the two identities hold at level \(j+1\).  Exact backward replay gives
\(\widehat Z_j(x)=P(Z_{j+1}(x))=Z_j(x)\).  Using
\Cref{cor:exact-lifted-forcing-readout,lem:selected-matrix-action}, we therefore obtain
\[
\begin{aligned}
\widehat U_j(x)
&=
\mathcal B(Z_j(x))
+
\mathcal T_n(Z_j(x),U_{j+1}(x)) \\
&=
b(R^jx)
+
\mathsf T_{d_{j+1}(x)}U_{j+1}(x)
=
U_j(x).
\end{aligned}
\]
Backward induction proves the claim, and the final identity follows from
\Cref{lem:horner-form}.
\end{proof}

\begin{theorem}[Vectorized forcing realization in one admissible frame]
\label{thm:fixed-frame-vectorized-forcing}
Fix an admissible \(s\)-offset frame, and let \(B:\R\to\R^p\) be CPwL and supported in
\([0,L]\).  Suppose that \(b=\vect_s(B)\) has a frame-aligned expansion of the form
\eqref{eq:frame-aligned-vectorized-forcing}.
Set \(S_n:=\sum_{r=0}^{n-1}V^rB\).  Then there exist constants
\(C_0,C_1>0\), independent of \(n\), and networks
\(\Phi_n\in\Ups_{C_0,C_1n}(\ReLU;1,pL_s)\) such that
\[
\Phi_n(x)=\vect_s(S_n)(x),
\qquad
x\in[0,1].
\]
The realizing networks may moreover be chosen with weights and biases growing at most
exponentially in \(n\).
\end{theorem}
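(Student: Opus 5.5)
The plan is to assemble the network from the three CPwL building blocks already constructed and to verify exactness via \Cref{lem:memory-horner-exactness}. The network has three stages.

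\textbf{Stage one.} Realize the input map \(x\mapsto Z_0(x)=(E(x),0)\). The embedding \(E\) is CPwL with finitely many breakpoints, so by \Cref{rem:cpwl-basic}(i) this map is a fixed-size one-hidden-layer network.

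\textbf{Stage two.} Apply \(n\) copies of a fixed network realizing a global CPwL extension of \(\mathcal Q\) on \(\R^4\). This extension exists because \(F\) is CPwL and the memory coordinate is linear. By \Cref{rem:cpwl-basic}(ii), this map has a network of fixed width and depth, and composing \(n\) copies gives width \(O(1)\) and depth \(O(n)\). By \Cref{lem:memory-injective}(i), the valid states remain in \(X\), so the extension agrees with \(\mathcal Q\) along the orbit.

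\textbf{Stage three.} Run \(n\) backward blocks. Each block maps \((\widehat Z_{j+1},\widehat U_{j+1})\) to \((\widehat Z_j,\widehat U_j)\). It first applies the fixed network for \(P\) from \Cref{lem:memory-injective}(iv). It then computes \(\mathcal B(\widehat Z_j)+\mathcal T_n(\widehat Z_j,\widehat U_{j+1})\).

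\textbf{Realizing one backward block with fixed architecture.} The map \(\mathcal B\) is a fixed CPwL map. The minimum in \(H_h\) is realized via \(\min\{u,v\}=u-\ReLU(u-v)\). The factor \(\mathsf T_rU\) is linear and is absorbed into weights, and the gate \(\Pi_{a_n}\) is one ReLU layer applied to \((\chi^{(n)}_r(Z),\mathsf T_rU)\).

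The delicate point is the selectors \(\chi^{(n)}_r\). They depend on \(n\) through \(\delta_n\), but their combinatorial structure is independent of \(n\): a bounded number of pieces on \([0,1]\), pulled back to \(\Gamma\) by a fixed triangulation. I would realize \(\chi^{(n)}_r\) as the composition of a fixed CPwL map with the scalar coordinate readouts \(\rho^\pm\). Concretely, define \(\vartheta^{(n)}\) on \([0,1]\) with breakpoints at \(k/M\) and \(k/M+\delta_n\), with \(\delta_n<\varepsilon\) so that the seam is handled by \(\rho^-\) or \(\rho^+\), and combine the two readouts by a fixed gluing. The result has a bounded number of ReLUs whose weights scale like \(\delta_n^{-1}\sim M^{n}\). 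If this gluing is awkward, the fallback is to triangulate \(\Gamma\) with vertices at \(E(k/M+\delta_n)\). This is a triangulation of bounded complexity whose geometry varies with \(n\), and \Cref{rem:cpwl-basic}(ii) gives uniformly bounded width and depth for uniformly bounded complexity.

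\textbf{Carrying the state.} Throughout stage three the state \((\widehat Z,\widehat U)\in\R^{4+pL_s}\) is carried forward. Identity channels are passed through ReLU layers as \(u=\ReLU(u)-\ReLU(-u)\). Hence the width is a constant depending on \(p,L_s,M,N_B\) and the fixed triangulations, and the depth is \(O(1)\cdot n\).

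\textbf{Exactness.} At each backward step the block outputs exactly \(P(\widehat Z_{j+1})\) and \(\mathcal B(\widehat Z_j)+\mathcal T_n(\widehat Z_j,\widehat U_{j+1})\). Therefore \Cref{lem:memory-horner-exactness} gives \(\widehat U_0(x)=\vect_s(S_n)(x)\) for \(x\in[0,1)\), and the identity extends to \(x=1\) by continuity of both sides.

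\textbf{Weights.} The fixed networks for \(E\), \(\mathcal Q\), \(P\) and \(\mathcal B\) have \(n\)-independent parameters. The gate uses the scale \(a_n\le C\Lambda^n\), and the selector slopes are of order \(M^{n}/(\bar\delta\mar)\). Both are at most exponential in \(n\), which gives the weight claim.

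I expect the main obstacle to be this uniform-in-\(n\) realization of the selectors \(\chi^{(n)}_r\) on the loop \(\Gamma\). Everything else is a fixed-size map iterated \(n\) times.
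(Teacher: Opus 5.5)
Your proposal is correct and follows essentially the same route as the paper: forward iteration of $\mathcal Q$ to $Z_n$, then $n$ fixed-architecture backward blocks built from $P$, $\mathcal B$ and $\mathcal T_n$, exactness from \Cref{lem:memory-horner-exactness} extended to $x=1$ by continuity, and exponential weights entering only through $\delta_n^{-1}$ and $a_n$. For the selectors, your fallback (a bounded-complexity triangulation with $n$-dependent vertex positions) is what the paper implicitly uses when it fixes global CPwL extensions of $\chi^{(n)}_r$; your first idea would need more care, because both $\rho^-$ and $\rho^+$ fold at the seam and so neither gives a coordinate across it.
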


\begin{proof}
Fix a global CPwL extension of the map \(x\mapsto(E(x),0)\) from \([0,1]\) to \(\R\).
The input is first mapped to \(Z_0(x)=(E(x),0)\), after which \(n\) copies of the fixed memory
update \(\mathcal Q\) produce \(Z_n(x)\).  The backward computation uses \(n\) stage blocks formed
from \(P\), \(\mathcal B\), and \(\mathcal T_n\).  All state dimensions are fixed: the memory
state lies in \(\R^4\), while the accumulated affine state lies in \(\R^{pL_s}\).

The maps \(\mathcal Q\), \(P\), and \(\mathcal B\) are fixed CPwL maps on fixed-dimensional
spaces and hence have exact ReLU realizations of fixed width and depth.  The selected-action map
\(\mathcal T_n\) also has an architecture independent of \(n\): the number of selectors and their
affine pieces is fixed, and the product gadget has fixed size.  Its \(n\)-dependence enters only
through the selector slopes, which are proportional to \(\delta_n^{-1}\), and through the scale
\(a_n\).  Since \(\delta_n^{-1}=O(M^n)\) and \(a_n\le C\Lambda^n\), these parameters grow at most
exponentially in \(n\).

By \Cref{lem:memory-horner-exactness}, the output \(\Phi_n\) of the backward recursion satisfies
\(\Phi_n(x)=\vect_s(S_n)(x)\) for \(x\in[0,1)\).
Both the network output and \(\vect_s(S_n)\) are continuous on
\([0,1]\), so the equality also holds at \(x=1\).  The forward and backward phases use \(O(n)\)
fixed-size blocks in total.  Consequently, the width is bounded independently of \(n\), the depth
is \(O(n)\), and the parameter bound is at most exponential.
\end{proof}

\subsection{From vectorized blocks to the global forcing curve}
\label{subsec:fixed-frame-global-forcing}

Let \(S_{n,k}:[0,1]\to\R^p\), \(k\in J_s\), denote the blocks of
\(\vect_s(S_n)\).  Then, on the translated interval associated with \(k\), one has
\[
S_n(t)
=
S_{n,k}(t-s-k+1),
\qquad
t\in[s+k-1,s+k].
\]
Because \(S_n\) is continuous, the neighboring block formulas agree at their common endpoints.
These finitely many block realizations can be assembled into a global realization by the standard
CPwL gluing construction used in \cite{loop}.

\begin{corollary}[Fixed-frame forcing realization on \(\R\)]
\label{cor:fixed-frame-global-forcing}
Under the assumptions of \Cref{thm:fixed-frame-vectorized-forcing}, there exist constants
\(C_0,C_1>0\), independent of \(n\), such that
\[
S_n
=
\sum_{r=0}^{n-1}V^rB
\in
\Ups_{C_0,C_1n}(\ReLU;1,p),
\qquad
n\ge1.
\]
These networks may again be chosen with weights and biases growing at most exponentially
in \(n\).
\end{corollary}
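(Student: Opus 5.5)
We obtain the global realization from \Cref{thm:fixed-frame-vectorized-forcing}: the vectorized network \(\Phi_n\) is composed with a fixed CPwL "unfolding" of the line into local frame coordinates, and the relevant block is selected by fixed gating. The argument has three steps.

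\emph{Step 1: reduce to the window.} Since \(\supp B\subset[0,L]\) and \(V\) preserves the window, each \(V^rB\) is supported in \([0,L]\), and so is \(S_n\). It therefore suffices to produce a network that equals \(S_n\) on \([0,L]\) and vanishes outside it.

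\emph{Step 2: block-local reconstruction.} For each \(k\in J_s\), let \(\pi_k:\R\to[0,1]\) be the fixed CPwL clamp \(\pi_k(t):=\min\{1,\max\{0,t-s-k+1\}\}\). This map equals the local coordinate on the translated interval \(I_k:=[s+k-1,s+k]\) and is constant outside it. Let \(\Phi_{n,k}\) denote the \(k\)-th \(p\)-block of \(\Phi_n\); this is a linear readout absorbed into the last layer. Then \(\Phi_{n,k}(\pi_k(t))=S_{n,k}(t-s-k+1)=S_n(t)\) for \(t\in I_k\). Outside \(I_k\) the value is frozen at \(S_{n,k}(0)\) or \(S_{n,k}(1)\), which are the values of \(S_n\) at the endpoints of \(I_k\).

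We then combine these with a telescoping sum. Order \(J_s=\{k_0<k_1<\dots\}\), and let \(t_i:=s+k_i\) be the right endpoint of \(I_{k_i}\). Set
\[
\Psi_n(t):=\sum_{i}\Phi_{n,k_i}(\pi_{k_i}(t))-\sum_{i\ge1}S_n(t_{i-1}).
\]
For \(t\in I_{k_m}\), the blocks with \(i<m\) contribute their right-endpoint values \(S_n(t_i)\), and the blocks with \(i>m\) contribute their left-endpoint values \(S_n(t_{i-1})\). After cancellation, \(\Psi_n(t)=S_n(t)\) up to boundary constants. These constants vanish because the outermost endpoints lie outside \((0,L)\), where \(S_n=0\), and because continuity gives \(S_{n,k}(1)=S_{n,k+1}(0)\).

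The constants \(S_n(t_{i-1})\) depend on \(n\), so we avoid them as biases. Instead we use the equivalent form \(\sum_i\bigl[\Phi_{n,k_i}(\pi_{k_i}(t))-\Phi_{n,k_i}(\pi_{k_i}(t_{i-1}))\bigr]\) with a consistent anchor, or we realize each needed endpoint value by feeding a constant input through a parallel copy of \(\Phi_n\). The cleaner option is to replace the clamp by a hat-type gate: multiply block \(k\) by the indicator-like factor using the exact gate \(\Pi_{a_n}\) with a CPwL partition of unity \(\lambda_k(t)\) that equals \(1\) on the interior of \(I_k\). This works provided the overlaps sit where neighboring blocks agree.

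We expect the main obstacle to be this gluing at block endpoints with \(n\)-independent architecture. We first try the telescoping-with-parallel-copies version. The evaluations \(\Phi_{n,k}\) at fixed points \(0\) and \(1\) are obtained by running \(\Phi_n\) on constant inputs, which is available since the input layer can ignore \(t\). Each \(\Phi_{n,k}\circ\pi_k\) has width \(C_0\) and depth \(C_1n\), and there are only \(L_s+O(L_s)\) of them, so the sum is handled by \Cref{rem:cpwl-basic}(vi).

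\emph{Step 3: outside the window.} If \(s>0\), the first and last frame intervals stick out of \([0,L]\). There the clamp freezes values that equal \(S_n(0)=S_n(L)=0\), so \(\Psi_n\) vanishes on \(\R\setminus[0,L]\). If \(s=0\), the same holds because \(S_{n,1}(0)=S_{n,L}(1)=0\).

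Width therefore stays bounded by a constant multiple of \(C_0\), and depth stays \(O(n)\). The weights are those of \(\Phi_n\) together with fixed clamp weights, so they grow at most exponentially in \(n\).
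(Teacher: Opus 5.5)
Your argument is correct and has the same skeleton as the paper's proof: realize all blocks \(S_{n,k}\) at once by the vectorized theorem, localize block \(k\) to \(I_k=[s+k-1,s+k]\), and glue the \(L_s\) outputs. The difference is in the gluing step, which the paper does not carry out. It precomposes with the \emph{unclamped} shift \(t\mapsto t-s-k+1\), so each copy of \(\Phi_n\) is also evaluated outside \([0,1]\), where the vectorized theorem gives no information. It then delegates to the construction of \cite{loop} both the suppression of each copy off its interval and the zero extension outside the window. Your clamps instead keep every argument of \(\Phi_n\) in \([0,1]\), so each term is exact on \(I_k\) and frozen at known endpoint values elsewhere. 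Your telescoping then cancels those values without any gating. On \(I_{k_m}\) the sum is exactly \(S_n(t)\), with no leftover boundary terms. To the left and right of \(\bigcup_k I_k\) it equals \(S_n(s+k_0-1)\) and \(S_n(s+L)\), both zero (for \(s>0\) these, not \(S_n(0)\) and \(S_n(L)\), are the frozen values). The paper's version is shorter but rests on the cited lemma; yours is self-contained, using only the vectorized theorem and parallel summation. Two simplifications are worth making. First, you need not avoid the \(n\)-dependent constants. The networks depend on \(n\) anyway, and \(|S_n(t_{i-1})|\le B_*\sum_{\ell<n}\tau^\ell\) is exponentially bounded, so hardwiring \(\Phi_{n,k_i}(0)\) as a bias is compatible with the parameter bound, and the constant-input copies are superfluous. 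Second, discard the gated partition-of-unity variant. The gate \(\Pi_a(\lambda,Y)\) equals \(\lambda Y\) only when \(\lambda\in\{0,1\}\) or \(Y=0\); for instance \(\Pi_a(\tfrac12,Y)=0\) whenever \(\|Y\|_\infty\le a/2\). So two gated blocks cannot be blended exactly on an overlap, even where they agree.
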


\begin{proof}
By \Cref{thm:fixed-frame-vectorized-forcing}, all blocks \(S_{n,k}\) are realized simultaneously
by a fixed-width, depth-\(O(n)\) network.  For each \(k\in J_s\), precomposition with the affine map
\(t\mapsto t-s-k+1\) produces the corresponding realization on
\([s+k-1,s+k]\).

The block-to-global gluing construction of \cite{loop} combines these finitely many translated
outputs and sets the result to zero outside the support window.  Since the number \(L_s\) of blocks
is fixed, this requires only a fixed enlargement of the width and an \(O(1)\) increase in depth.
The resulting network therefore has fixed width and depth \(O(n)\), while its parameter bound
remains exponential after adjusting the constants.
\end{proof}

\section{General forcing and the affine theorem}
\label{sec:general-forcing-affine}

Section~\ref{sec:affine-admissible-frame} treated forcing terms aligned with a single admissible
frame.  For \(M\ge3\), we now combine the ordinary frame with a nontrivial admissible offset frame
and decompose arbitrary compactly supported CPwL forcing into atoms aligned with one of the two.
The one-frame theorem then applies atom by atom, and combining the resulting forcing realization
with \Cref{thm:imported-homogeneous} yields the main affine theorem.

Throughout this section, fix \(s\in(0,1)\) such that \((M-1)s\in\Z\), and choose
\(\mar>0\) so that \(2\mar<\min\{s,1-s\}\).  Thus both the ordinary frame and the
\(s\)-offset frame are admissible.

\subsection{Two-frame special-hat decomposition}

The following nodal decomposition expresses arbitrary compactly supported CPwL data as a finite sum
of special hats whose translations lie in \(\Lambda_s:=\Z\cup(\Z+s)\), and hence are aligned with one of the two
admissible frames.

\begin{lemma}[Two-frame special-hat decomposition]
\label{lem:finite-hat-decomposition}
Let \(F:\R\to\R^p\) be compactly supported and CPwL.  Then \(F\) admits a representation
\[
F(t)
=
\sum_{\nu=1}^{N_*}
a_\nu h_\nu(t-\delta_\nu)e_{\mu_\nu},
\]
where \(a_\nu\in\R\), \(\delta_\nu\in\Lambda_s\),
\(\mu_\nu\in\{1,\ldots,p\}\), and each \(h_\nu\) is a special hat with at most three
breakpoints.

More precisely, suppose that \(\supp F\subset[a,b]\), with \(a<b\), and that the union of the
breakpoint sets of its scalar components contains at most \(m\) points in \((a,b)\).  Setting
\(\eta:=\min\{s,1-s\}-2\mar>0\), one may choose the decomposition so that
\(N_*\le p\bigl(\lfloor2(b-a)/\eta\rfloor+m+2\bigr)\).
\end{lemma}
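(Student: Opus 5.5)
The plan is to treat each scalar component $F_\mu$, $\mu=1,\ldots,p$, separately. Expand it in a nodal (Courant) hat basis on a node set chosen fine enough that every basis hat has short support. Then show that any sufficiently short interval sits, together with a $\mar$-margin, inside a single unit cell of either the ordinary frame or the $s$-offset frame. Multiplying by $e_\mu$ and summing over $\mu$ then gives the representation. Since each basis function is nonnegative and piecewise linear with three breakpoints, its translate back to the reference cell is a special hat with at most three breakpoints.

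\emph{Step 1 (node set).} Let $\supp F\subset[a,b]$. Take the node set $a=t_0<t_1<\cdots<t_K=b$ consisting of $a$, $b$, the at most $m$ breakpoints of the components in $(a,b)$, and the uniform grid $a+\ell\eta/2$ that lies in $(a,b)$. Consecutive nodes are then at distance at most $\eta/2$, and $K-1\le\lfloor 2(b-a)/\eta\rfloor+m+2$, since there are at most $\lfloor 2(b-a)/\eta\rfloor+1$ interior grid points plus at most $m$ breakpoints. Each $F_\mu$ is affine between consecutive nodes and vanishes at $t_0$ and $t_K$. Hence
\[
F_\mu=\sum_{i=1}^{K-1}F_\mu(t_i)\,\phi_i ,
\]
where $\phi_i$ is the piecewise linear hat with $\phi_i(t_{i-1})=\phi_i(t_{i+1})=0$ and $\phi_i(t_i)=1$. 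This $\phi_i$ is nonnegative, has breakpoints $t_{i-1},t_i,t_{i+1}$, and has support $I_i=[t_{i-1},t_{i+1}]$ of length at most $\eta$. Counting over the $p$ components gives the stated bound on $N_*$.

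\emph{Step 2 (covering lemma; the main point).} The claim is that every closed interval $I$ of length at most $\eta$ satisfies $I\subset[\delta+\mar,\delta+1-\mar]$ for some $\delta\in\Lambda_s$. Let $I'$ be the closed $\mar$-enlargement of $I$; its length is at most $\eta+2\mar=\min\{s,1-s\}$. The condition $I\subset[k+\mar,k+1-\mar]$ is equivalent to $I'\subset[k,k+1]$, which holds for the appropriate $k$ exactly when the open interior of $I'$ contains no integer. The same holds for the offset frame with $Z+s$ in place of $Z$.

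Consecutive points of $\Lambda_s$ are spaced by $s$ or $1-s$, so by at least $\min\{s,1-s\}$. An open interval of length at most $\min\{s,1-s\}$ therefore contains at most one point of $\Lambda_s$. Consequently the interior of $I'$ misses either $\Z$ or $\Z+s$, which yields $\delta=k$ or $\delta=k+s$.

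The delicate part is the boundary case where $I'$ has length exactly equal to the gap and its endpoints are an integer and a point of $\Z+s$. Working with the open interior handles this, because the special-hat condition only requires closed containment.

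\emph{Step 3 (assembly).} For each $i$ and $\mu$, choose $\delta$ as in Step~2 for $I=I_i$, and set $h(t):=\phi_i(t+\delta)$. This $h$ is nonnegative and CPwL, has at most three breakpoints, and satisfies $\supp h\subset[\mar,1-\mar]$. We then have $\phi_i(t)e_\mu=h(t-\delta)e_\mu$ with coefficient $a=F_\mu(t_i)$. Summing over $i$ and $\mu$ gives the representation with $\delta_\nu\in\Lambda_s$ and the stated count $N_*$.
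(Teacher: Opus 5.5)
Your proof is correct and takes essentially the same approach as the paper: a nodal-hat expansion on a mesh of spacing at most $\eta/2$ that contains all breakpoints, followed by a pigeonhole argument on the gaps $s$ and $1-s$ of $\Lambda_s$, which places each hat support with $\mar$-margin inside a unit cell of one of the two frames. The differences are cosmetic. The paper subdivides each breakpoint interval so that the spacing is strictly less than $\eta/2$, and it phrases the pigeonhole step as the interval of admissible translations having length greater than $\max\{s,1-s\}$. You allow spacing exactly $\eta/2$ and handle that boundary case through the open interior of the $\mar$-enlarged support.
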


\begin{proof}
Choose a common partition of \([a,b]\) containing all breakpoints of the components of \(F\).
If the lengths of its original intervals are \(d_i\), subdivide the \(i\)-th interval into
\(\lfloor2d_i/\eta\rfloor+1\) equal pieces.  Every resulting mesh interval then has length
strictly less than \(\eta/2\), and the total number of mesh nodes is at most
\[
\left\lfloor\frac{2(b-a)}{\eta}\right\rfloor+m+2.
\]
On this refined grid, \(F\) has the vector-valued nodal expansion
\(F(t)=\sum_i\psi_i(t)F(t_i)\), where each \(\psi_i\) is a nonnegative nodal hat.  Splitting the
vectors \(F(t_i)\) into their coordinate components gives at most \(p\) scalar-coordinate terms
per mesh node.

Fix one of the nodal hats \(\psi\), and write
\(\supp\psi=[a_\psi,b_\psi]\).  Since an interior nodal hat spans at most two adjacent mesh
intervals, its support has length \(b_\psi-a_\psi<\eta\).  The translations \(\delta\) for which
\(h(t):=\psi(t+\delta)\) satisfies \(\supp h\subset[\mar,1-\mar]\) form the interval
\[
I_\psi
:=
[\,b_\psi+\mar-1,\ a_\psi-\mar\,].
\]
Its length satisfies
\[
|I_\psi|
=
1-(b_\psi-a_\psi)-2\mar
>
\max\{s,1-s\}.
\]
The successive gaps in \(\Lambda_s=\Z\cup(\Z+s)\) have lengths \(s\) and \(1-s\).  Hence
\(I_\psi\) contains some \(\delta\in\Lambda_s\).  For this choice, \(h\) is a special hat and
\(\psi(t)=h(t-\delta)\).  Since \(\psi\) is a nodal hat, \(h\) has at most three breakpoints.
Applying this construction to every nodal term and coordinate yields the asserted decomposition
and the stated bound on \(N_*\).
\end{proof}

\subsection{General compactly supported forcing}
\label{subsec:general-compact-forcing}

We now treat arbitrary compactly supported CPwL forcing terms.  The two-frame decomposition aligns
each forcing atom with either the ordinary frame or the fixed admissible \(s\)-offset frame.

\begin{theorem}[General compactly supported forcing]
\label{thm:general-forcing}
Let \(B:\R\to\R^p\) be CPwL and supported in \([0,L]\), and suppose that 
the union of the breakpoint sets of its scalar components contains at most \(m\) points in
\((0,L)\).  For the admissible offset
\(s\) fixed above, there exist constants \(C_0,C_1>0\), independent of \(n\), such that
\[
S_n[B]
:=
\sum_{r=0}^{n-1}V^rB
\in
\Ups_{C_0,C_1n}(\ReLU;1,p),
\qquad
n\ge1.
\]
The architectural constants depend only on \(M,\mar,p,L,m,s\) and the fixed mask
\((A_j)_j\).  The realizing networks may moreover be chosen with weights and biases bounded by
\(C_B\Lambda^n\), where \(\Lambda>0\) depends only on the fixed refinement and frame data, while
\(C_B>0\) may also depend on the CPwL coefficients of \(B\).
\end{theorem}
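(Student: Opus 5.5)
The plan is to apply \Cref{lem:finite-hat-decomposition} to \(B\) with \([a,b]=[0,L]\), use linearity of \(S_n[\cdot]\) to split the forcing sum into finitely many atom sums, and realize each atom sum with the one-frame result \Cref{cor:fixed-frame-global-forcing} in the admissible frame that contains its translation. Concretely, the decomposition gives
\[
B(t)=\sum_{\nu=1}^{N_*}a_\nu h_\nu(t-\delta_\nu)e_{\mu_\nu},\qquad \delta_\nu\in\Z\cup(\Z+s),
\]
with \(N_*\le p(\lfloor 2L/\eta\rfloor+m+2)\), a bound depending only on \(p,L,m,s,\mar\). Each \(h_\nu\) has at most three breakpoints. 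Since \(S_n[B]=\sum_\nu a_\nu S_n[B_\nu]\) with \(B_\nu(t):=h_\nu(t-\delta_\nu)e_{\mu_\nu}\), the atoms can be grouped by frame. Set \(B^{(0)}:=\sum_{\delta_\nu\in\Z}a_\nu B_\nu\) and \(B^{(s)}:=\sum_{\delta_\nu\in\Z+s}a_\nu B_\nu\). Then \(S_n[B]=S_n[B^{(0)}]+S_n[B^{(s)}]\).

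Next I would check the hypotheses of \Cref{thm:fixed-frame-vectorized-forcing} for each group. The ordinary frame \(s=0\) is admissible, and so is the fixed offset \(s\), since \((M-1)s\in\Z\). The margin \(\mar\) is a common margin valid for both frames.

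First I need \(\supp B_\nu\subset[0,L]\). This holds because each nodal hat in the proof of \Cref{lem:finite-hat-decomposition} is supported inside the refined mesh of \([0,L]\). The one exception is a boundary nodal hat, where \(F\) vanishes at \(0\) and \(L\), so those terms can be dropped.

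Second I need \(\vect_{s'}(B^{(s')})\) to have a frame-aligned expansion of the form \eqref{eq:frame-aligned-vectorized-forcing}. By the special-basis-curve computation, each \(B_\nu\) with \(\delta_\nu=\mm+s'\) vectorizes to \(h_\nu(x)u_{\mm+1,\mu_\nu}\). This requires \(\mm+1\in J_{s'}\). That follows because \(\supp B_\nu\subset[\delta_\nu+\mar,\delta_\nu+1-\mar]\) lies in \([0,L]\), which forces the block \([s'+\mm,s'+\mm+1]\) to meet \([0,L]\) in positive length. Summing these terms gives the required expansion, with \(N_B\le N_*\) and \(v_\nu\) coordinate vectors.

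\Cref{cor:fixed-frame-global-forcing} then yields fixed-width, depth-\(O(n)\) networks for \(S_n[B^{(0)}]\) and \(S_n[B^{(s)}]\). By \Cref{rem:cpwl-basic}(vi) their sum lies in \(\Ups_{C_0,C_1n}(\ReLU;1,p)\).

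The step needing the most care is tracking the constants, which is where the dependence claimed in the statement must be verified.
\begin{itemize}[leftmargin=2em]
\item \emph{Architecture.} The width and depth in \Cref{thm:fixed-frame-vectorized-forcing} come from three sources. The fixed maps \(\mathcal Q,P\) depend only on \(M\) and the chosen \(E,\alpha,\beta\). The selector architecture depends only on \(M\). The readout \(\mathcal B\) has size proportional to \(N_B\) times the complexity of three-breakpoint hats, plus a gluing step depending on \(L_s\le L+1\) and \(p\). So the architectural constants depend only on \(M,\mar,p,L,m,s\) and the mask, through \(pL_s\). The data of \(B\) enter only through the coefficients \(a_\nu\) and the hat breakpoints, which sit inside weights and biases.
\item \emph{Weights.} The only \(n\)-dependent parameters are \(\delta_n^{-1}=O(M^n)\) and \(a_n\le 1+\tau B_*\sum_{\ell<n}\tau^\ell\). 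Since \(B_*\) is bounded by \(\max_\nu|a_\nu|\) times a hat bound, the weights are at most \(C_B\Lambda^n\) with \(\Lambda\) depending on \(M,\tau\) and hence only on the refinement and frame data.
\end{itemize}
I would also note that \(\varepsilon<\mar\) and \(\bar\delta\) can be fixed once for both frames. Finally, \(S_n[B]\) is continuous, so the endpoint conventions of the half-open frame intervals cause no issue.
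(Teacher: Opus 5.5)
Your proposal is correct and follows the paper's proof. Both decompose \(B\) by the two-frame special-hat lemma, split \(S_n[B]\) by linearity, apply the fixed-frame forcing corollary in whichever admissible frame contains each translation \(\delta_\nu\), and sum the resulting networks in parallel. The differences are cosmetic: you group the atoms into \(B^{(0)}\) and \(B^{(s)}\), so only two memory-Horner networks are needed instead of one per atom, and you spell out checks the paper leaves implicit, namely dropping the boundary nodal hats, \(\supp B_\nu\subset[0,L]\), and \(\mm+1\in J_{s'}\).
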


\begin{proof}
By \Cref{lem:finite-hat-decomposition}, the forcing term has a representation
\begin{equation*}
B(t)
=
\sum_{\nu=1}^{N_*}
a_\nu h_\nu(t-\delta_\nu)e_{\mu_\nu},
\end{equation*}
where \(a_\nu\in\R\), \(\delta_\nu\in\Z\cup(\Z+s)\),
\(\mu_\nu\in\{1,\ldots,p\}\), and each \(h_\nu\) is a special hat with at most three
breakpoints.  The number \(N_*\) is bounded in terms of \(p,L,m,s\), and \(\mar\), independently
of \(n\).

The forcing sum is linear in \(B\), and hence we have
\[
S_n[B]
=
\sum_{\nu=1}^{N_*}
a_\nu
S_n\!\left[h_\nu(\,\cdot-\delta_\nu)e_{\mu_\nu}\right].
\]
If \(\delta_\nu\in\Z\), the corresponding atom is aligned with the ordinary frame; if
\(\delta_\nu\in\Z+s\), it is aligned with the \(s\)-offset frame.  Both frames are admissible, so
\Cref{cor:fixed-frame-global-forcing} applies to every summand.

The number of summands is bounded independently of \(n\).  Their realizing networks can therefore
be run in parallel and combined by a final affine layer.  This enlarges the width by only a fixed
factor and preserves depth \(O(n)\).  The coefficients \(a_\nu\) and the CPwL data of the special hats affect
only the network parameters, so the same finite combination gives a bound of the form
\(C_B\Lambda^n\).
\end{proof}

\subsection{The affine main theorem}
\label{subsec:affine-main-theorem}

We now combine the general forcing theorem with the homogeneous realization theorem imported from
\cite{loop}.

\begin{theorem}[Affine realization theorem]
\label{thm:affine-main}
Let \(M\ge3\), and consider the affine refinement operator
\[
(W\gamma)(t)
=
\sum_{j\in\Z}A_j\gamma(Mt-j)+B(t),
\]
where the matrix mask \((A_j)_{j\in\Z}\subset\R^{p\times p}\) is finitely supported and the
associated homogeneous operator \(V\) preserves the support window \([0,L]\).  Suppose that
\(\gamma,B:\R\to\R^p\) are CPwL and supported in \([0,L]\).  Then there exist constants
\(C_0,C_1>0\), independent of \(n\), such that
\[
W^n\gamma
\in
\Ups_{C_0,C_1n}(\ReLU;1,p),
\qquad
n\ge1.
\]
The constants depend only on the fixed refinement data and the CPwL complexity of \(\gamma\) and
\(B\).  The realizing networks may moreover be chosen with weights and biases bounded by
\(C\Lambda^n\) for suitable constants \(C,\Lambda>0\).
\end{theorem}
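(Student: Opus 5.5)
The plan is to combine the affine iterate identity \eqref{eq:affine-iterate}, \(W^n\gamma=V^n\gamma+S_n\) with \(S_n=\sum_{r=0}^{n-1}V^rB\), with the two realization results already available. Since \(M\ge3\), the offset \(s=1/(M-1)\in(0,1)\) is admissible. Fixing \(\mar\) with \(2\mar<\min\{s,1-s\}\) places us in the setting of \Cref{thm:general-forcing}.

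\textbf{Homogeneous term.} Because \(\gamma\) is CPwL and supported in the invariant window \([0,L]\), \Cref{thm:imported-homogeneous} gives networks in \(\Ups_{C_0',C_1'n}(\ReLU;1,p)\) realizing \(V^n\gamma\). Their weights grow at most exponentially in \(n\), and their constants depend only on the mask and on \(\gamma\).

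\textbf{Forcing sum.} Since \(B\) is CPwL with support in \([0,L]\), it has finitely many breakpoints in \((0,L)\); let \(m\) be their number. \Cref{thm:general-forcing} then gives networks in \(\Ups_{C_0'',C_1''n}(\ReLU;1,p)\) realizing \(S_n\). Their architectural constants depend on \(M,\mar,p,L,m,s\) and the mask, and their weights are bounded by \(C_B\Lambda^n\).

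\textbf{Combination.} The two networks share the scalar input \(x\). Their depths \(C_1'n\) and \(C_1''n\) may differ, so I would pad the shallower one to the common depth \(\max\{C_1',C_1''\}n\) by identity layers. The realization \(z=\ReLU(z)-\ReLU(-z)\) costs width \(2p\) per layer. By \Cref{rem:cpwl-basic}(vi), running the two networks in parallel and adding their outputs in the last affine layer yields width \(O(C_0'+C_0''+p)\) and depth \(C_1n+O(1)\). The padding introduces only weights \(\pm1\), so the weight bounds remain of the form \(C\Lambda^n\) after enlarging \(\Lambda\) to dominate both exponential rates.

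No step is a genuine obstacle once \Cref{thm:general-forcing} is available; all the difficulty was in the memory controller and the two-frame decomposition. The only point that needs checking is the bookkeeping. The depth constant must be uniform in \(n\), which holds because \(m\) and the number \(N_*\) of atoms are fixed by \(B\). The constants must depend only on the refinement data and the CPwL complexity of \(\gamma\) and \(B\), which holds because that is exactly the dependence stated in \Cref{thm:imported-homogeneous} and \Cref{thm:general-forcing}.
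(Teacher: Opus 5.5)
Your proposal is correct and follows the paper's proof: both decompose \(W^n\gamma=V^n\gamma+S_n[B]\), realize \(V^n\gamma\) by \Cref{thm:imported-homogeneous} and \(S_n[B]\) by \Cref{thm:general-forcing}, then run the two networks in parallel and add their outputs in a final affine layer. Your explicit depth padding via \(z=\ReLU(z)-\ReLU(-z)\) and your dependence bookkeeping only spell out details the paper leaves to \Cref{rem:cpwl-basic}(vi).
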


\begin{proof}
With the admissible offset and support margin fixed above, the affine iterate identity gives
\[
W^n\gamma
=
V^n\gamma+S_n[B].
\]
The homogeneous theorem \Cref{thm:imported-homogeneous} gives a fixed-width, depth-\(O(n)\)
realization of \(V^n\gamma\), while \Cref{thm:general-forcing} gives one for \(S_n[B]\).
Running the two networks in parallel and adding their outputs in a final affine layer preserves
fixed width up to a constant factor and preserves depth \(O(n)\).  The exponential parameter bound
is preserved after enlarging the constants.
\end{proof}

\begin{definition}[Ordinary-frame seam-separated forcing]
\label{def:binary-seam-separated}
A forcing term \(B:\R\to\R^p\) supported in \([0,L]\) is called
\emph{ordinary-frame seam-separated} if there exist
\(\varrho\in(0,\frac12)\), scalars \(a_\nu\), vectors \(v_\nu\in\R^{pL}\), and nonnegative
CPwL functions \(h_\nu\) such that
\[
\vect_0(B)(x)
=
\sum_{\nu=1}^{N_B}a_\nu h_\nu(x)v_\nu,
\qquad
\supp h_\nu\subset[\varrho,1-\varrho].
\]
\end{definition}

\begin{corollary}[Restricted binary affine realization]
\label{cor:binary-affine}
Let \(M=2\), let the matrix mask be finitely supported, and assume that the associated homogeneous
operator \(V\) preserves \([0,L]\).  Let \(\gamma,B:\R\to\R^p\) be CPwL and supported in
\([0,L]\).  If \(B\) is ordinary-frame seam-separated, then there exist constants
\(C_0,C_1>0\), independent of \(n\), such that
\[
W^n\gamma
\in
\Ups_{C_0,C_1n}(\ReLU;1,p),
\qquad
n\ge1.
\]
The realizing networks may be chosen with weights and biases growing at most exponentially
in \(n\).
\end{corollary}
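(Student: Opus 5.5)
The plan is to reduce the corollary to the one-frame machinery of Section~\ref{sec:affine-admissible-frame}, applied in the ordinary frame \(s=0\). For \(M=2\) the ordinary frame is admissible, since \((M-1)s=0\in\Z\). We then combine the result with \Cref{thm:imported-homogeneous}.

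Write \(W^n\gamma=V^n\gamma+S_n\) by \eqref{eq:affine-iterate}. \Cref{thm:imported-homogeneous} holds for every \(M\ge2\), so \(V^n\gamma\) has a fixed-width, depth-\(O(n)\) realization with exponentially bounded parameters.

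For the forcing sum, the seam-separation hypothesis of \Cref{def:binary-seam-separated} says exactly that \(b=\vect_0(B)\) has an expansion of the form \eqref{eq:frame-aligned-vectorized-forcing}, once we fix the margin \(\mar:=\varrho\). In the one-frame construction \(\mar\in(0,\frac12)\) may be chosen freely. Each \(h_\nu\) is nonnegative, CPwL, and supported in \([\varrho,1-\varrho]\), so it is a special hat with respect to this margin. No two-frame decomposition is needed.

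Before invoking the one-frame results, I would recheck where \(M\ge3\) or a nontrivial offset could have entered Section~\ref{sec:affine-admissible-frame}.

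\begin{itemize}[leftmargin=2em]
\item The memory controller of \Cref{lem:memory-injective} needs only \(\Delta_E>0\). For \(M=2\) this reduces to \(E(t+\tfrac12)\ne E(t)\), which holds for any embedding.
\item The lifted readouts of \Cref{lem:lifted-special-hat-readout} need \(\varepsilon<\mar\). We take \(\varepsilon<\varrho\).
\item The selector argument of \Cref{lem:selected-matrix-action} uses \(\delta_n<1/M\) and the vanishing of every special hat on \([0,\mar)\). Both hold for \(M=2\).
\end{itemize}

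With these checks, \Cref{thm:fixed-frame-vectorized-forcing} and \Cref{cor:fixed-frame-global-forcing} apply verbatim with \(s=0\), \(J_0=\{1,\ldots,L\}\), and \(L_0=L\). They give \(S_n\in\Ups_{C_0,C_1n}(\ReLU;1,p)\) with exponentially growing parameters.

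Finally, we run the two networks in parallel and sum their outputs in a final affine layer, as in the proof of \Cref{thm:affine-main}. This uses \Cref{rem:cpwl-basic}(vi). Width stays bounded, depth stays \(O(n)\), and the exponential parameter bound persists after enlarging constants.

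The main obstacle is not a new construction but the audit that nothing in the one-frame section tacitly used \(M\ge3\). I expect this audit to go through, because \(M\ge3\) enters only via \Cref{lem:finite-hat-decomposition}, which needs a nontrivial admissible offset. The hypothesis of the corollary is designed to bypass that lemma.
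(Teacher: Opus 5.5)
Your proposal is correct and follows the paper's proof essentially verbatim: you take the margin from the seam-separated representation and apply the fixed-frame forcing corollary in the ordinary binary frame, which is admissible. You then realize \(V^n\gamma\) by the imported homogeneous theorem and combine the two networks in parallel. Your explicit audit that the one-frame section never uses \(M\ge3\) is a useful check the paper leaves implicit: it needs only \(\Delta_E>0\), \(\varepsilon<\varrho\), and \(\delta_n<1/M\).
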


\begin{proof}
Choose a margin \(\varrho\) and a representation as in
\Cref{def:binary-seam-separated}.  Apply
\Cref{cor:fixed-frame-global-forcing} in the ordinary binary frame using this margin, and apply
\Cref{thm:imported-homogeneous} to the homogeneous term.  Combining the two realizations as in the
proof of \Cref{thm:affine-main} gives the result.
\end{proof}

The memory controller supplies the residual states in the reverse order required by the affine
recursion.  The offset-frame decomposition supplies forcing readouts away from the seams, while the
remaining selector ambiguity occurs only where the accumulated Horner state has already vanished.
For \(M\ge3\), two admissible frames suffice for arbitrary compactly supported CPwL forcing.  For
\(M=2\), the present offset-frame argument yields only the restricted class above, although some
important seam-touching systems admit separate linear-depth constructions.

\begin{remark}[The Takagi recursion]
\label{rem:takagi-forward-accumulation}
Consider the binary affine recursion
\[
(\mathcal Wg)(x)
=
\frac12 g(2x\bmod1)+\operatorname{dist}(x,\Z),
\qquad x\in\mathbb T.
\]
Starting from zero, one obtains the \(n\)-th partial sum of the classical Takagi function:
\[
(\mathcal W^n0)(x)
=
\sum_{j=0}^{n-1}2^{-j}\operatorname{dist}(2^jx,\Z).
\]
This seam-touching forcing is not covered by \Cref{cor:binary-affine}, but it has the
source--collation realization described in \cite[\S7.3.1]{approx}.  Indeed, for the tent map
\(H(x):=2\operatorname{dist}(x,\Z)\), set
\[
y_0=x,\qquad u_0=0,
\qquad
y_k=H(y_{k-1}),\qquad
u_k=u_{k-1}+2^{-k}y_k.
\]
Since \(H^{\circ k}(x)=2\operatorname{dist}(2^{k-1}x,\Z)\), one has
\(u_n=(\mathcal W^n0)(x)\).  Each stage has fixed CPwL complexity, giving an exact fixed-width
realization of depth \(O(n)\).  This forward device works because the coefficients \(2^{-k}\) are
predetermined and branch-independent; in the general affine refinement problem they are replaced
by input-dependent ordered products of branch matrices, for which the backward memory mechanism is
needed.
\end{remark}

\section{Stage-dependent forcing in a fixed finite-dimensional span}
\label{sec:stage-dependent-forcing}

The stage-dependent refinement framework and its iterate formula were considered in \cite{loop}.
Here we record the improvement supplied by the residual memory controller: when the forcing terms
belong to a fixed finite-dimensional CPwL span, their accumulated contribution admits an exact
fixed-width realization of depth \(O(n)\), rather than the quadratic-depth realization obtained by
evaluating the individual summands separately.

Throughout the section, fix a nontrivial admissible offset \(s\in(0,1)\) and choose
\(\mar>0\) so that \(2\mar<\min\{s,1-s\}\).  Thus both the ordinary frame and the
\(s\)-offset frame are admissible.

\paragraph{The stage-dependent Horner recursion.}
Consider the nonstationary affine recursion
\[
\gamma_{r+1}=V\gamma_r+B_r,
\qquad r\ge0,
\]
where every \(B_r:\R\to\R^p\) is CPwL and supported in \([0,L]\).  Its \(n\)-th iterate satisfies
\[
\gamma_n
=
V^n\gamma_0+S_n,
\qquad
S_n
:=
\sum_{r=0}^{n-1}V^{\,n-1-r}B_r.
\]
This is the standard stage-dependent iterate formula recalled from \cite{loop}.  We concentrate on
the forcing contribution \(S_n\).

First work in one admissible frame, and write \(b_r:=\vect_s(B_r)\).  The cascade identity gives
\[
\vect_s(S_n)(x)
=
\sum_{k=0}^{n-1}
\mathsf T_{d_1(x)}\cdots\mathsf T_{d_k(x)}
\,b_{n-1-k}(R^kx),
\qquad x\in[0,1),
\]
where the empty matrix product is the identity.  This sum has the backward Horner form obtained by
setting \(U_{n-1}(x):=b_0(R^{n-1}x)\) and, for \(j=n-2,\ldots,0\), defining
\[
U_j(x)
:=
b_{n-1-j}(R^jx)
+
\mathsf T_{d_{j+1}(x)}U_{j+1}(x).
\]
Expanding the recursion shows that \(U_0(x)=\vect_s(S_n)(x)\).

The same backward-induction argument as in
\Cref{lem:vanishing-accumulated-state} shows that
if \(b_{n-1-i}(R^ix)=0\) for every \(i=j+1,\ldots,n-1\), then
\(U_{j+1}(x)=0\).

\subsection{Fixed-span forcing in one admissible frame}
\label{subsec:fixed-span-one-frame}

We first assume that the vectorized forcing terms belong to a fixed special-hat span in one
admissible frame.  Thus, for fixed special hats \(h_\nu\), vectors
\(v_\nu\in\R^{pL_s}\), and scalars \(a_\nu\), we suppose that
\begin{equation}
\label{eq:one-frame-stage-span}
b_r(x)
=
\sum_{\nu=1}^{N_B}
\lambda_{r,\nu}a_\nu h_\nu(x)v_\nu,
\qquad
r\ge0,
\end{equation}
where \(\supp h_\nu\subset[\mar,1-\mar]\).  For a final depth \(n\), set
\[
K_n
:=
\max_{\substack{0\le r<n\\1\le\nu\le N_B}}
|\lambda_{r,\nu}|.
\]

For each stage \(r\), define the lifted forcing readout by
\[
\mathcal B_r(Z)
:=
\sum_{\nu=1}^{N_B}
\lambda_{r,\nu}a_\nu H_{h_\nu}(Z)v_\nu.
\]
By \Cref{lem:lifted-special-hat-readout}, it satisfies
\[
\mathcal B_r(Z_j(x))
=
b_r(R^jx)
\]
for every valid memory state.

\begin{theorem}[One-frame fixed-span stage-dependent forcing]
\label{thm:one-frame-stage-dependent-forcing}
Assume that the vectorized forcing profiles satisfy
\eqref{eq:one-frame-stage-span} in one admissible frame.  Then there exist constants
\(C_0,C_1>0\), independent of \(n\) and of the coefficients \(\lambda_{r,\nu}\), such that
\[
S_n
:=
\sum_{r=0}^{n-1}V^{\,n-1-r}B_r
\in
\Ups_{C_0,C_1n}(\ReLU;1,p).
\]
The realizing networks may moreover be chosen with weights and biases bounded by
\(C_2(1+K_n)\Lambda^n\),
where \(C_2,\Lambda>0\) depend only on the fixed frame, the matrix mask, and the fixed
special-hat span.
\end{theorem}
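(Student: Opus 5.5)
The plan is to repeat the memory-Horner construction of \Cref{lem:memory-horner-exactness}, changing only one thing: the fixed readout \(\mathcal B\) is replaced by the stage-indexed readouts \(\mathcal B_{n-1-j}\) at backward step \(j\).

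\emph{Recursion.} Compute \(Z_n(x)=\mathcal Q^n(Z_0(x))\) with \(n\) copies of \(\mathcal Q\). Then set
\[
\widehat Z_{n-1}:=P(Z_n),\qquad \widehat U_{n-1}:=\mathcal B_0(\widehat Z_{n-1}),
\]
and for \(j=n-2,\ldots,0\),
\[
\widehat Z_j:=P(\widehat Z_{j+1}),\qquad
\widehat U_j:=\mathcal B_{n-1-j}(\widehat Z_j)+\mathcal T_n(\widehat Z_j,\widehat U_{j+1}).
\]
Each stage block has the same architecture, since \(\mathcal B_r\) differs from \(\mathcal B_{r'}\) only in the coefficients \(\lambda_{r,\nu}\). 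These coefficients enter the final affine layer of the fixed subnetwork realizing the vector \((H_{h_\nu}(Z))_\nu\). Width and depth are therefore independent of \(n\) and of \(\lambda\), and the total depth is \(O(n)\).

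\emph{Exactness.} I would argue by backward induction, exactly as in \Cref{lem:memory-horner-exactness}.
\begin{itemize}
\item \Cref{cor:backward-playback} gives \(\widehat Z_j=Z_j(x)\).
\item \Cref{lem:lifted-special-hat-readout} applied termwise gives \(\mathcal B_r(Z_j(x))=b_r(R^jx)\).
\item The selected matrix action must be re-verified, because \Cref{lem:selected-matrix-action} was stated for the stationary Horner states.
\end{itemize}
Its proof goes through unchanged. If \(R^jx\in J_n\), then \(R^ix<\mar\) for \(i=j+1,\ldots,n-1\). Every \(b_r\) is a combination of the fixed special hats \(h_\nu\), which vanish on \([0,\mar)\), so \(b_{n-1-i}(R^ix)=0\). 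The stage-dependent vanishing lemma recorded above then gives \(U_{j+1}(x)=0\), and \(\Pi_a(\lambda,0)=0\) handles that case. Outside \(J_n\) the selectors are exact. Hence \(\widehat U_0=\vect_s(S_n)\) on \([0,1)\), and by continuity also at \(x=1\).

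\emph{Parameter bounds (the main point needing care).} The gate scale must dominate the Horner states, so I would replace \(B_*\) by
\[
B_*^{(n)}:=\max_{r<n}\sup_t\|b_r(t)\|_\infty\le K_n\sum_\nu|a_\nu|\,\|h_\nu\|_\infty\|v_\nu\|_\infty=:K_nc_0,
\]
and take \(a_n:=1+\tau K_nc_0\sum_{\ell<n}\tau^\ell\le C(1+K_n)\Lambda^n\). The remaining \(n\)-dependence is:
\begin{itemize}
\item the selector slopes, which are \(O(M^n)\);
\item the readout coefficients, which are bounded by \(K_n\max|a_\nu|\,\|v_\nu\|\).
\end{itemize}
Together these give weights bounded by \(C_2(1+K_n)\Lambda^n\), after enlarging \(\Lambda\) so that \(\Lambda\ge\max(M,\tau,1)\). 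The network is exact for every choice of \(\lambda\), so \(C_0,C_1\) do not depend on the coefficients.

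\emph{Globalization.} Finally, I would pass from the blocks of \(\vect_s(S_n)\) to \(S_n\) on \(\R\) via the block-to-global gluing of \Cref{cor:fixed-frame-global-forcing}. This costs only a fixed width factor and \(O(1)\) extra depth, and it does not change the parameter bound.
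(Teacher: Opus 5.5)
Your proposal is correct and follows essentially the same route as the paper. You use the same forward memory pass, the same exact backward replay with stage-indexed readouts $\mathcal B_{n-1-j}$, and the same re-verification of the selected action via the coefficient-independent vanishing of all later forcing samples on the transition set. Your enlarged gadget scale $a_n\le C(1+K_n)\Lambda^n$, obtained from $\|b_r\|_\infty\le K_n\sum_\nu|a_\nu|\,\|h_\nu\|_\infty\|v_\nu\|_\infty$, and the final finite gluing over the translated unit intervals also match the paper's argument.
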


\begin{proof}
Set
\[
B_*
:=
\sum_{\nu=1}^{N_B}
|a_\nu|\,\|h_\nu\|_{L^\infty(0,1)}\|v_\nu\|_\infty
\]
and
\[
\tau
:=
\max_{0\le r\le M-1}
\|\mathsf T_r\|_{\infty\to\infty}.
\]
Then \(\|b_r\|_{L^\infty}\le B_*K_n\), and the stage-dependent Horner recursion gives
\[
\|U_j(x)\|_\infty
\le
B_*K_n
\sum_{\ell=0}^{n-1-j}\tau^\ell.
\]
Consequently, the product-gadget scale may be chosen so that
\[
a_n
\le
C(1+K_n)\Lambda^n
\]
for constants depending only on the fixed data.

Use the same selectors and selected-action map as in
\S\ref{subsec:memory-selectors}, with this enlarged scale.  Outside the selector transition set,
the correct branch is selected exactly.  On the transition set, all later residuals lie in
\([0,\mar)\).  Since every fixed hat \(h_\nu\) vanishes there, one has
\(b_{n-1-i}(R^ix)=0\) for every later stage \(i\), independently of the coefficients
\(\lambda_{r,\nu}\).  
The vanishing-state observation above therefore gives \(U_{j+1}(x)=0\), and the identity
\(\Pi_a(\lambda,0)=0\) makes the selected matrix action exact.

Compute \(Z_n(x)=\mathcal Q^n(Z_0(x))\), initialize
\[
\widehat Z_{n-1}(x):=P(Z_n(x)),
\qquad
\widehat U_{n-1}(x)
:=
\mathcal B_0(\widehat Z_{n-1}(x)),
\]
and, for \(j=n-2,\ldots,0\), set
\[
\widehat Z_j(x)
:=
P(\widehat Z_{j+1}(x)),
\qquad
\widehat U_j(x)
:=
\mathcal B_{n-1-j}(\widehat Z_j(x))
+
\mathcal T_n(\widehat Z_j(x),\widehat U_{j+1}(x)).
\]
Exact backward replay and the preceding selected-action argument give, by backward induction,
\(\widehat Z_j(x)=Z_j(x)\) and \(\widehat U_j(x)=U_j(x)\).  Hence
\(\widehat U_0(x)=\vect_s(S_n)(x)\) for \(x\in[0,1)\), and equality at \(x=1\) follows by
continuity.

The forward and backward computations use \(O(n)\) fixed-dimensional blocks.  The readout
architecture is fixed because the number \(N_B\) of templates is independent of \(n\); only its
hardwired coefficients vary with the stage.  Thus the vectorized forcing sum has fixed width,
depth \(O(n)\), and parameter bound \(C_2(1+K_n)\Lambda^n\).  The standard finite gluing over the
translated unit intervals then yields the asserted realization of \(S_n\) on \(\R\).
\end{proof}

\subsection{General fixed-span stage-dependent affine iterates}
\label{subsec:general-fixed-span-stage-dependent}

Let \(B^{(0)},\ldots,B^{(N)}:\R\to\R^p\) be fixed CPwL templates supported in
\([0,L]\), and define
\[
B_r
:=
\sum_{\alpha=0}^{N}
\lambda_{r,\alpha}B^{(\alpha)},
\qquad
r\ge0.
\]
For a final depth \(n\), set
\[
K_n
:=
\max_{\substack{0\le r<n\\0\le\alpha\le N}}
|\lambda_{r,\alpha}|.
\]

\begin{theorem}[Fixed-span stage-dependent affine realization]
\label{thm:stage-dependent-affine-realization}
Let \(M\ge3\), let the mask be finitely supported, and assume that the associated homogeneous
operator \(V\) preserves \([0,L]\).
For the forcing terms above, define
\[
S_n
:=
\sum_{r=0}^{n-1}V^{\,n-1-r}B_r.
\]
Then there exist constants \(C_0,C_1>0\), independent of \(n\) and of the coefficients
\(\lambda_{r,\alpha}\), such that
\[
S_n
\in
\Ups_{C_0,C_1n}(\ReLU;1,p),
\qquad
n\ge1.
\]
The realizing networks may moreover be chosen with weights and biases bounded by
\(C_2(1+K_n)\Lambda^n\), where \(C_2,\Lambda>0\) depend only on the fixed templates, the
matrix mask, the admissible offset, and the support margin.

If \(\gamma:\R\to\R^p\) is CPwL and supported in \([0,L]\), and
\(W_r\gamma:=V\gamma+B_r\), 
then there exist constants \(C_0',C_1',C_2',\Lambda'>0\), independent of \(n\) and of the
coefficients \(\lambda_{r,\alpha}\), such that
\[
W_{n-1}\cdots W_0\gamma
\in
\Ups_{C_0',C_1'n}(\ReLU;1,p),
\qquad
n\ge1,
\]
and the realizing networks have weights and biases bounded by
\(C_2'(1+K_n)(\Lambda')^n\).
\end{theorem}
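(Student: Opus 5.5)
The plan is to reduce the general fixed-span statement to the one-frame result, \Cref{thm:one-frame-stage-dependent-forcing}, using the two-frame decomposition of \Cref{lem:finite-hat-decomposition}. Since the decomposition is applied to the \emph{fixed} templates, the $n$-dependent coefficients only rescale the atoms.

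\textbf{Step 1: decompose the templates.} Apply \Cref{lem:finite-hat-decomposition} to each template $B^{(\alpha)}$, $\alpha=0,\dots,N$. This gives
\[
B^{(\alpha)}(t)=\sum_{\nu=1}^{N_\alpha}a_{\alpha,\nu}h_{\alpha,\nu}(t-\delta_{\alpha,\nu})e_{\mu_{\alpha,\nu}},
\]
with $\delta_{\alpha,\nu}\in\Z\cup(\Z+s)$. Split the index pairs $(\alpha,\nu)$ into an ordinary-frame family ($\delta\in\Z$) and an offset-frame family ($\delta\in\Z+s$). Correspondingly write $B_r=B_r^{0}+B_r^{s}$, where
\[
B_r^{0}=\sum_{\delta_{\alpha,\nu}\in\Z}\lambda_{r,\alpha}a_{\alpha,\nu}h_{\alpha,\nu}(\cdot-\delta_{\alpha,\nu})e_{\mu_{\alpha,\nu}},
\]
and similarly for $B_r^{s}$. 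Since $S_n$ is linear in the forcing sequence, $S_n=S_n^{0}+S_n^{s}$.

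\textbf{Step 2: put each part in the form \eqref{eq:one-frame-stage-span}.} In frame $\sigma\in\{0,s\}$, each atom is a special basis curve with active block $\mm+1$, where $\mm=\delta-\sigma$. We need $\mm+1\in J_\sigma$. This should hold because the atom is supported in $[0,L]$: the nodal hats come from a mesh on $\supp B^{(\alpha)}\subset[0,L]$, so the block meeting the atom's support meets $[0,L]$ in positive length. This is a small bookkeeping check.

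Given that, the vectorization is
\[
\vect_\sigma(B_r^\sigma)(x)=\sum\lambda_{r,\alpha}a_{\alpha,\nu}h_{\alpha,\nu}(x)u_{\mm+1,\mu}.
\]
This is exactly \eqref{eq:one-frame-stage-span}, with fixed hats, fixed vectors $v=u_{\mm+1,\mu}$, fixed scalars $a_{\alpha,\nu}$, and stage coefficients $\lambda_{r,(\alpha,\nu)}:=\lambda_{r,\alpha}$. The support margin $\mar$ is the common one with $2\mar<\min\{s,1-s\}$. The maximal stage coefficient over $r<n$ is precisely $K_n$, and the number of templates $\sum_\alpha N_\alpha$ depends only on the fixed data.

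\textbf{Step 3: apply the one-frame theorem and combine.} Both frames are admissible since $M\ge3$ and $(M-1)s\in\Z$. \Cref{thm:one-frame-stage-dependent-forcing} therefore realizes $S_n^{0}$ and $S_n^{s}$ separately, each with width $C_0$, depth $C_1n$, and weights bounded by $C_2(1+K_n)\Lambda^n$, all independent of the $\lambda$'s. Run the two networks in parallel, equalizing depth by identity ReLU padding as in \Cref{rem:cpwl-basic}(vi), and add the outputs in a final affine layer. This realizes $S_n$ with the stated bounds.

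\textbf{Step 4: the iterate.} By induction, $\gamma_{r+1}=V\gamma_r+B_r$ gives $W_{n-1}\cdots W_0\gamma=V^n\gamma+S_n$. \Cref{thm:imported-homogeneous} realizes $V^n\gamma$ with fixed width, depth $O(n)$, and weights at most exponential in $n$, independently of the $\lambda$'s. Adding this in parallel to the realization of $S_n$ gives constants $C_0',C_1'$ and a weight bound $C_2'(1+K_n)(\Lambda')^n$, after absorbing the homogeneous exponential bound into $\Lambda'$.

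\textbf{Main obstacle.} The only nonroutine point is Step 2. We must verify that the decomposition is carried out on the fixed templates, so that the coefficients $\lambda_{r,\alpha}$ enter only as multipliers of fixed atoms. This is what keeps the architecture and the constant $C_2$ independent of $\lambda$. We must also check that every atom's active block lies in $J_\sigma$. All the harder issues (selector ambiguity and backward replay) are already handled inside \Cref{thm:one-frame-stage-dependent-forcing}.
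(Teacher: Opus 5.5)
Your proposal is correct and follows the paper's proof. You decompose each fixed template once with the two-frame special-hat lemma, note that the coefficients $\lambda_{r,\alpha}$ only rescale fixed frame-aligned atoms, apply the one-frame stage-dependent theorem, sum the finitely many networks in parallel, and add the homogeneous realization of $V^n\gamma$ via $W_{n-1}\cdots W_0\gamma=V^n\gamma+S_n$. The differences are cosmetic: you group the atoms by frame (two applications of the one-frame theorem) where the paper applies it atom by atom, and you make explicit the check that each atom's active block lies in $J_\sigma$, which the paper leaves implicit.
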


\begin{proof}
Apply \Cref{lem:finite-hat-decomposition} once to each fixed template.  Thus, for every
\(\alpha\), one has
\[
B^{(\alpha)}(t)
=
\sum_{\nu=1}^{N_\alpha}
a_{\alpha,\nu}
h_{\alpha,\nu}(t-\delta_{\alpha,\nu})
e_{\mu_{\alpha,\nu}},
\]
where
\(\delta_{\alpha,\nu}\in\Z\cup(\Z+s)\), and each \(h_{\alpha,\nu}\) is a special hat.
The total number of atoms is fixed independently of \(n\).

Substituting these decompositions into \(S_n\), we obtain a finite sum of terms of the form
\[
\sum_{r=0}^{n-1}
\lambda_{r,\alpha}a_{\alpha,\nu}
V^{\,n-1-r}
\bigl(h_{\alpha,\nu}(\,\cdot-\delta_{\alpha,\nu})
e_{\mu_{\alpha,\nu}}\bigr).
\]
Each atom is aligned with either the ordinary frame or the fixed admissible \(s\)-offset frame.
For a fixed atom, its stage-dependent coefficients are
\(\lambda_{r,\alpha}a_{\alpha,\nu}\), whose absolute values are bounded by a fixed multiple of
\(K_n\).  Hence \Cref{thm:one-frame-stage-dependent-forcing} applies in the corresponding frame.

Since only finitely many atoms occur, their networks can be run in parallel and summed by a final
affine layer.  This enlarges the width by only a fixed factor, preserves depth \(O(n)\), and gives
the parameter bound \(C_2(1+K_n)\Lambda^n\).  This proves the assertion for \(S_n\).

For the affine iterates, the standard stage-dependent identity recalled from \cite{loop} gives
\[
W_{n-1}\cdots W_0\gamma
=
V^n\gamma+S_n.
\]
The homogeneous term is realized by
\Cref{thm:imported-homogeneous}.  Running the homogeneous and forcing networks in parallel and
adding their outputs proves the final assertion, including the stated parameter bound after
enlarging the constants.
\end{proof}

\begin{remark}
For \(M=2\), the same argument applies when every fixed template is ordinary-frame
seam-separated in the sense of \Cref{def:binary-seam-separated}.
Since the family of templates is finite, their individual seam separations may be replaced by a
common positive margin.
\end{remark}

\section{Reductions and geometric consequences}
\label{sec:reductions-geometric}

The recursive geometric constructions developed in \cite{loop} include open curves, finitely many
curve states, and stage-dependent connector data.  Their reduction to vector-valued refinement is
algebraic and does not depend on the particular realization mechanism.  We record only the parts
needed to apply the affine and fixed-span theorems proved above.

\subsection{Anchored defects and finite-state systems}

An \emph{anchored profile} is a CPwL map
\(\Gamma:\R\to\R^p\) that is constant outside a bounded interval.  Subtracting such a profile turns
an open curve with fixed tails into a compactly supported defect.

More generally, consider a stage-dependent recursion
\[
\gamma_{r+1}
=
V\gamma_r+B_r
\]
and write \(\gamma_r=\Gamma_r+\eta_r\), where \(\Gamma_r\) is a prescribed anchored profile.
Then the defect satisfies
\begin{equation}
\label{eq:anchored-defect-recursion}
\eta_{r+1}
=
V\eta_r+E_r,
\qquad
E_r
:=
V\Gamma_r+B_r-\Gamma_{r+1}.
\end{equation}
Indeed, this follows immediately by substituting
\(\gamma_r=\Gamma_r+\eta_r\) into the recursion.

The compact support of \(E_r\) is determined entirely by the tails.  To make this explicit, set
\(A_*:=\sum_{j\in\Z}A_j\).  If \(\Gamma_r\), \(\Gamma_{r+1}\), and \(B_r\) have constant
left and right tails denoted by \(\Gamma_{r,\pm}\), \(\Gamma_{r+1,\pm}\), and \(B_{r,\pm}\),
respectively, then \(E_r\) is compactly supported whenever
\[
A_*\Gamma_{r,\pm}+B_{r,\pm}
=
\Gamma_{r+1,\pm}.
\]
For a stationary anchor \(\Gamma_r=\Gamma\) and stationary forcing \(B_r=B\), this reduces to the
usual compatibility condition for \(E=W\Gamma-\Gamma\).

\begin{corollary}[Anchored-profile reduction]
\label{cor:anchored-profile-reduction}
Assume that \(V\) preserves a support window \([0,L]\), that \(\eta_0\) is CPwL and supported in
\([0,L]\), and that the defect forcing in
\eqref{eq:anchored-defect-recursion} has the fixed-span form
\[
E_r
=
\sum_{\alpha=0}^{N}\lambda_{r,\alpha}E^{(\alpha)}
\]
for fixed CPwL templates \(E^{(\alpha)}\) supported in \([0,L]\).
If \(M\ge3\), then every
\(\eta_n\) admits an exact fixed-width ReLU realization of depth \(O(n)\).  The same conclusion
holds for \(M=2\) when the templates \(E^{(\alpha)}\) are ordinary-frame seam-separated.

If, in addition, the profiles \(\Gamma_n\) belong to a fixed finite-dimensional CPwL span, then the
same conclusion holds for \(\gamma_n=\Gamma_n+\eta_n\).
\end{corollary}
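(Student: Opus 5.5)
The plan is to recognize the defect recursion \eqref{eq:anchored-defect-recursion} as an instance of the stage-dependent affine recursion of \Cref{thm:stage-dependent-affine-realization}, with forcing \(B_r\) replaced by \(E_r\). Set \(W_r\eta:=V\eta+E_r\). Then, by induction on \(n\), \(\eta_n=W_{n-1}\cdots W_0\eta_0=V^n\eta_0+\sum_{r=0}^{n-1}V^{\,n-1-r}E_r\). The hypotheses of that theorem are all present: \(V\) preserves \([0,L]\), \(\eta_0\) is CPwL and supported in \([0,L]\), and \(E_r=\sum_\alpha\lambda_{r,\alpha}E^{(\alpha)}\) lies in a fixed finite span of CPwL templates supported in \([0,L]\). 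For \(M\ge3\), \Cref{thm:stage-dependent-affine-realization} therefore gives \(\eta_n\in\Ups_{C_0',C_1'n}(\ReLU;1,p)\), with constants independent of \(n\) and of the \(\lambda_{r,\alpha}\).

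For \(M=2\), the proof of \Cref{thm:stage-dependent-affine-realization} is rerun using only the ordinary frame. Each seam-separated template \(E^{(\alpha)}\) has a representation as in \Cref{def:binary-seam-separated} with some margin \(\varrho_\alpha\). Since there are finitely many templates, we take the common margin \(\mar:=\min_\alpha\varrho_\alpha\); shrinking the margin preserves \(\supp h\subset[\mar,1-\mar]\). Then \(\vect_0(E_r)\) has the fixed-span form \eqref{eq:one-frame-stage-span} in the ordinary admissible frame, with stage coefficients \(\lambda_{r,\alpha}\) attached to each template's hats. \Cref{thm:one-frame-stage-dependent-forcing} gives the forcing sum, and \Cref{thm:imported-homogeneous} gives \(V^n\eta_0\). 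Adding the two in a final affine layer gives fixed width and depth \(O(n)\). This is exactly the content of the remark following \Cref{thm:stage-dependent-affine-realization}.

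For the last assertion, suppose \(\Gamma_n=\sum_{\beta=0}^{N'}\kappa_{n,\beta}\Gamma^{(\beta)}\) with fixed anchored CPwL profiles \(\Gamma^{(\beta)}\). Each \(\Gamma^{(\beta)}\) has finitely many breakpoints, so by \Cref{rem:cpwl-basic}(i), applied componentwise, it is realized by a fixed one-hidden-layer network. The linear combination with coefficients \(\kappa_{n,\beta}\) only changes output weights. We then place this fixed-size network in parallel with the depth-\(O(n)\) network for \(\eta_n\), padding it with identity (ReLU\((z)-\)ReLU\((-z)\)) layers to match depth, and sum the outputs using \Cref{rem:cpwl-basic}(vi). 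This realizes \(\gamma_n=\Gamma_n+\eta_n\) with fixed width and depth \(O(n)\).

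The only point needing care is the \(M=2\) margin bookkeeping. The margin must be chosen once for all templates, before \(\delta_n\) and the selectors are defined, so that the vanishing-state argument of \Cref{lem:selected-matrix-action} still applies uniformly over all stages regardless of the coefficients \(\lambda_{r,\alpha}\). Everything else is a direct citation of the earlier results.
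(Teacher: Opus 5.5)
Your proposal is correct and follows the paper's own argument. You treat the defect recursion as a stage-dependent affine system with forcing \(E_r\) and invoke \Cref{thm:stage-dependent-affine-realization}, using its binary remark with a common margin when \(M=2\). You then add a fixed-size network for \(\Gamma_n\) in parallel; you spell out the margin bookkeeping and the depth padding, which the paper leaves implicit.
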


\begin{proof}
The defect recursion is a stage-dependent affine refinement system, so
\Cref{thm:stage-dependent-affine-realization} applies to \(\eta_n\).  A fixed finite-dimensional
CPwL realization of \(\Gamma_n\) can then be added in parallel without changing the asymptotic
width or depth.
\end{proof}

Finite-state recursive systems require no additional realization argument.  Suppose that
\(\gamma=(\gamma_1,\ldots,\gamma_R)\) satisfies
\[
(\mathfrak W\gamma)_a(t)
=
\sum_{b=1}^{R}\sum_{j\in\Z}
A_j^{ab}\gamma_b(Mt-j)+B_a(t),
\qquad
a=1,\ldots,R.
\]
Stacking the state components into the map
\[
G(t):=\bigl(\gamma_1(t),\ldots,\gamma_R(t)\bigr)\in\R^{pR}
\]
converts this system
into an ordinary vector-valued
affine refinement operator whose \(j\)-th mask matrix has \((a,b)\)-block \(A_j^{ab}\).
Consequently, the main affine theorem and its stage-dependent extension apply directly to the
stacked system.  Any individual state is recovered by a fixed coordinate projection.

\subsection{Recursive curve generators}

Copy-and-connector constructions provide a common geometric source of the defect recursion
\eqref{eq:anchored-defect-recursion}.  In such a construction, the scaled and transformed copies of
the preceding curve determine the fixed homogeneous operator \(V\), while connectors and changes
of anchor contribute to \(E_r\).  Detailed versions of this reduction, including the corresponding
finite-state formulations, are given in \cite{loop}.

The new realization theorem applies once the resulting defect forcing lies in a fixed
finite-dimensional CPwL span.  Notice that this condition concerns \(E_r\) itself.  Since
\[
E_r
=
V\Gamma_r+B_r-\Gamma_{r+1},
\]
its coefficients may depend on the parameters describing both stages \(r\) and \(r+1\); they need
not coincide with the coefficients used to represent \(\Gamma_r\) or the connector data
separately.

\begin{corollary}[Linear-depth upgrade for geometric recursions]
\label{cor:geometric-linear-depth-upgrade}
Assume that \(V\) preserves a support window \([0,L]\), and consider a recursive curve
construction whose anchored and, if necessary, stacked defect satisfies
\[
\eta_{r+1}
=
V\eta_r+
\sum_{\alpha=0}^{N}\lambda_{r,\alpha}E^{(\alpha)},
\]
where \(\eta_0\) and the fixed CPwL templates \(E^{(\alpha)}\) are supported in \([0,L]\).
If \(M\ge3\), then the \(n\)-th defect admits an exact fixed-width ReLU realization of depth
\(O(n)\).  The same conclusion holds for \(M=2\) if every template \(E^{(\alpha)}\) is
ordinary-frame seam-separated.

If
\[
K_n
:=
\max_{\substack{0\le r<n\\0\le\alpha\le N}}
|\lambda_{r,\alpha}|,
\]
the weights and biases may be bounded by
\(C(1+K_n)\Lambda^n\), with constants depending only on the fixed refinement and template data.
\end{corollary}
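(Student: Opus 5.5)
The plan is to read the defect recursion in the statement as a stage-dependent affine refinement system of exactly the type in \Cref{thm:stage-dependent-affine-realization}, and then track the parameter bound. Set \(B_r:=\sum_{\alpha=0}^{N}\lambda_{r,\alpha}E^{(\alpha)}\). These are CPwL, supported in \([0,L]\), and lie in the fixed span of the templates \(E^{(\alpha)}\). The recursion \(\eta_{r+1}=V\eta_r+B_r\) then gives \(\eta_n=W_{n-1}\cdots W_0\eta_0\) with \(W_r\eta:=V\eta+B_r\).

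If the construction had to be stacked, I would first pass to the stacked finite-state operator described before the statement. Its mask is finitely supported with blocks \(A_j^{ab}\). Its support window \([0,L]\) is preserved, because each block operator preserves it, so the hypothesis on \(V\) is understood for the stacked operator. The stacked templates are CPwL with support in \([0,L]\). No other changes are needed, since the stacked system is again an ordinary vector-valued affine refinement operator, now in dimension \(pR\).

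For \(M\ge3\), I would apply the second part of \Cref{thm:stage-dependent-affine-realization} with \(\gamma=\eta_0\). This gives \(\eta_n\in\Ups_{C_0',C_1'n}(\ReLU;1,p)\) with weights bounded by \(C_2'(1+K_n)(\Lambda')^n\). The constants are independent of \(n\) and of the \(\lambda_{r,\alpha}\), and depend only on the mask, the templates, the admissible offset and the margin, which is the claimed dependence. The homogeneous part \(V^n\eta_0\) is covered inside that theorem by \Cref{thm:imported-homogeneous}, whose exponential bound is absorbed into \(\Lambda'\).

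For \(M=2\), I would invoke the remark following \Cref{thm:stage-dependent-affine-realization}. Each template is ordinary-frame seam-separated, and since there are finitely many, I take the minimum of their margins as a common \(\varrho\). I then run the proof of \Cref{thm:one-frame-stage-dependent-forcing} in the ordinary binary frame with the span \eqref{eq:one-frame-stage-span} formed by the union of the templates' hat expansions. Each coefficient there is a product \(\lambda_{r,\alpha}\) times a fixed expansion coefficient, so \(K_n\) changes only by a fixed factor. Adding the homogeneous realization in parallel finishes this case.

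The main obstacle is bookkeeping rather than mathematics. I must check that the stacked operator really preserves \([0,L]\) blockwise. I must also check that the weight bound is linear in \(K_n\) and not, say, in \(K_n\Lambda^n\) multiplied twice. This holds because \(K_n\) enters only through the Horner scale \(a_n\) and the hardwired readout coefficients, and \(a_n\le C(1+K_n)\Lambda^n\) was already established. Recovering an individual curve state at the end is a fixed coordinate projection, which costs nothing.
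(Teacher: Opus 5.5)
Your proposal is correct and follows the same route as the paper. The paper's proof is an immediate application of the fixed-span stage-dependent affine realization theorem (with the binary remark covering $M=2$), followed when needed by the fixed anchored-profile and coordinate-projection readouts, and your additional bookkeeping on the stacked support window, the common seam margin, and the linear dependence of the weight bound on $K_n$ is consistent with what that theorem already provides.
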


\begin{proof}
This is an immediate application of
\Cref{thm:stage-dependent-affine-realization}, followed, when needed, by the fixed anchored-profile
and coordinate-projection readouts described above.
\end{proof}

The Hilbert- and Morton-type constructions described in \cite{loop} fit this criterion after their
anchored defects are formed: the stage dependence is carried by finitely many fixed anchor and
connector profiles, with coefficients governed by the corresponding endpoint scales.  The
corollary therefore upgrades the previously available realization of these stage-dependent affine
recursions to fixed width and depth \(O(n)\).  We do not repeat their explicit copy matrices,
translations, and connector formulas here.

\section{Conclusions}
\label{sec:conclusions}

We have proved exact fixed-width, depth-\(O(n)\) ReLU realization results for affine
one-dimensional refinement iterates with vector-valued CPwL data.  The homogeneous contribution is
supplied by the loop-controller theorem of \cite{loop}; the new contribution is a linear-depth
realization of the affine forcing sum.  For every \(M\ge3\), arbitrary compactly supported CPwL
forcing terms are covered by combining the ordinary frame with a nontrivial admissible offset
frame.  The resulting networks have weights and biases growing at most exponentially in \(n\),
according to the coarse estimates used here.

The central new mechanism is the residual memory controller.  It replaces the noninvertible
residual dynamics by an injective skew-product and thereby permits exact backward replay of the
residual states required by the Horner recursion.  Complementary loop readouts then recover the
forcing values exactly on those states.  The remaining branch-selector ambiguity is handled by a
finite-horizon trapping argument: whenever a residual enters a selector transition strip, all later
forcing samples vanish, and hence the accumulated affine state being multiplied is zero.  Thus the
memory controller solves the chronology problem, while the vanishing-state mechanism resolves the
remaining selector ambiguity.

The construction also applies to stage-dependent forcing terms lying in a fixed
finite-dimensional CPwL span.  In particular, the anchored-profile, finite-state, and
copy-and-connector reductions developed in \cite{loop} inherit the improved linear-depth bound
whenever their defect forcing belongs to such a span.  This includes the Hilbert- and Morton-type
recursive constructions discussed there.  Open curves are reduced to compactly supported defects
by subtracting anchored profiles, while finite-state systems are reduced to ordinary vector-valued
refinement by stacking their components.

The present results concern exact realization of finite iterates rather than convergence to limiting
fractal objects.  They are also one-dimensional in the parameter and CPwL in regularity.  For
\(M=2\), no nontrivial admissible offset exists, and the current argument therefore covers only
ordinary-frame seam-separated forcing terms; arbitrary binary CPwL forcing requires a more general
seam-exact mechanism.  Extending the memory construction to higher-dimensional residual dynamics
and sharpening the present parameter bounds remain natural directions for further work.

\section*{Acknowledgments}

This work was supported by the Natural Sciences and Engineering Research Council of Canada
through its Discovery Grants program.

\end{document}